\definecolor{ourmiddle}{HTML}{66509B}
\declaretheoremstyle[
    headfont=\bfseries,
    bodyfont=\normalfont,
    spaceabove=10pt, spacebelow=10pt,
    headpunct={},
    postheadspace=1em,
    mdframed={
        backgroundcolor=ourmiddle!5,
        linecolor=ourmiddle!50!black,
        linewidth=1pt,
        roundcorner=4pt
    }
]{colorboxed}
\DeclarePairedDelimiterX{\infdivx}[2]{(}{)}{%
  #1\;\delimsize\|\;#2%
}
\def\eqref#1{equation~\ref{#1}}
\def\1{\bm{1}}
\DeclareMathAlphabet{\mathsfit}{\encodingdefault}{\sfdefault}{m}{sl}
\SetMathAlphabet{\mathsfit}{bold}{\encodingdefault}{\sfdefault}{bx}{n}
\definecolor{darkblue}{rgb}{0,0.08,0.45}
\definecolor{cred}{HTML}{D62728}
\definecolor{cblue}{HTML}{1F77B4}
\definecolor{cgreen}{HTML}{79AB76}
\definecolor{cgrey}{rgb}{0.6,0.6,0.6}
\newcommand{\dd}[1]{\mathrm{d}#1}
\newcommand{\ours}[0]{\textsc{QAM}}
\renewcommand{\mathbf}{\boldsymbol}
\title{Q-Learning with Adjoint Matching}
\author{Qiyang Li \\ UC Berkeley \\ \texttt{qcli@berkeley.edu} \And Sergey Levine\\
UC Berkeley  \\ \texttt{svlevine@berkeley.edu}}
\begin{document}

\definecolor{ourpurple}{HTML}{101010}
\definecolor{ourblue}{HTML}{0076BA}
\definecolor{ourmiddle}{HTML}{66509B}
\definecolor{ourlightblue}{HTML}{F5FAFE}
\definecolor{ourlightmiddle}{HTML}{F7F6F9}
\hypersetup{colorlinks=true, allcolors=ourmiddle}

\maketitle

\begin{abstract}
\vspace{-5pt}
We propose Q-learning with Adjoint Matching (QAM), a novel TD-based reinforcement learning (RL) algorithm that tackles a long-standing challenge in continuous-action RL: efficient optimization of an expressive diffusion or flow-matching policy with respect to a parameterized Q-function. Effective optimization requires exploiting the first-order information of the critic, but it is challenging to do so for flow or diffusion policies because direct gradient-based optimization via backpropagation through their multi-step denoising process is numerically unstable.
Existing methods work around this either by only using the value and discarding the gradient information,
or by relying on approximations that sacrifice policy expressivity or bias the learned policy. QAM sidesteps both of these challenges by leveraging adjoint matching, a recently proposed technique in generative modeling, which transforms the critic's action gradient to form a step-wise objective function that is free from unstable backpropagation, while providing an unbiased, expressive policy at the optimum. Combined with temporal-difference backup for critic learning, QAM consistently outperforms prior approaches on hard, sparse reward tasks in both offline and offline-to-online RL. \textbf{Code:} \href{http://github.com/ColinQiyangLi/qam}{\texttt{github.com/ColinQiyangLi/qam}}
\end{abstract}

\vspace{-2mm}
\section{Introduction}
\vspace{-2mm}
\label{sec:intro}
\begin{figure}[H]
    \centering
    \includegraphics[width=0.574\linewidth]{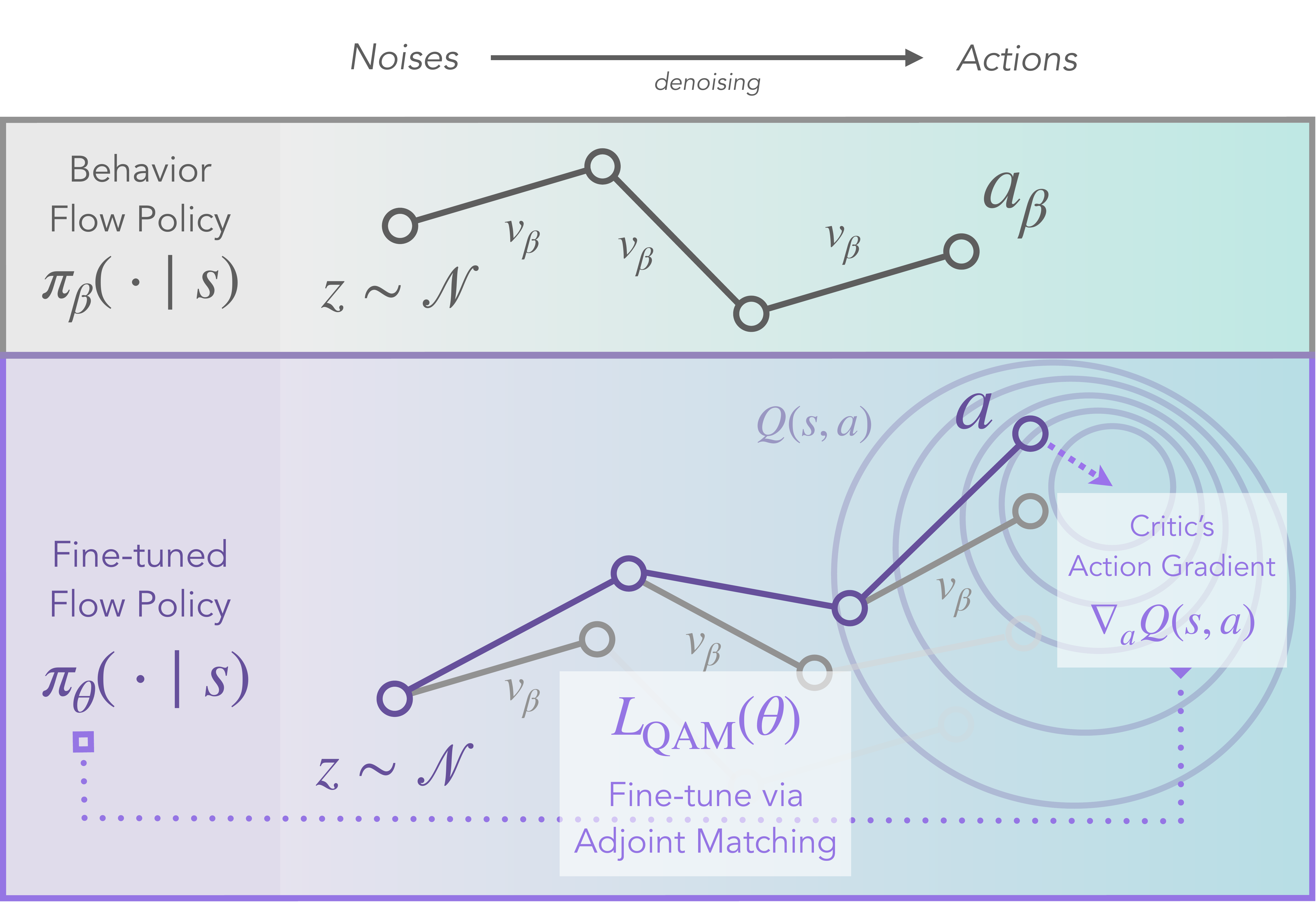} 
    \includegraphics[width=0.42\linewidth]{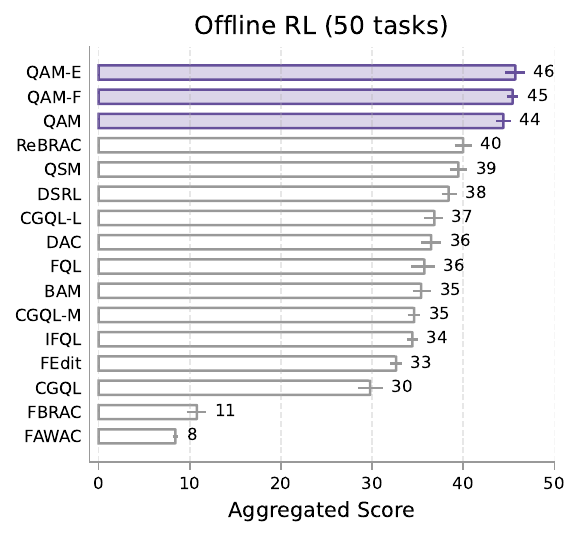}
    \caption{ \textbf{QAM: Q-learning with Adjoint Matching.} \emph{Left:} QAM uses adjoint matching~\citep{domingo-enrich2025adjoint} that leverages the critic's action gradient directly to fine-tune a flow policy towards the optimal behavior-constrained policy: $\pi_\theta(\cdot \mid s) \propto \pi_\beta(\cdot\mid s) e^{Q(s, \cdot)}$. \emph{Right:} Aggregated score for offline RL on 50 OGBench~\citep{ogbench_park2024} tasks.}
    \label{fig:main}
\end{figure}

A long-standing tension in continuous-action reinforcement learning (RL) especially in the offline/offline-to-online setting, is between policy expressivity and optimization tractability with respect to a critic (\emph{i.e.}, $Q(s, a)$). Simple policies, such as single-step Gaussian policies, are easy to train, since they can directly leverage the critic's action gradient (\emph{i.e.}, $\nabla_a Q(s, a)$) via the reparameterization trick~\citep{haarnoja2018soft}. 
This optimization tractability, however, often comes at the cost of expressivity. Some of the most expressive policy classes today, such as flow or diffusion policies, generate actions through a multi-step denoising process. While this allows them to represent complex, multi-modal action distributions, leveraging the action gradient requires backpropagation through the entire denoising process, which often leads to instability~\citep{park2025flow}. Prior work has therefore resorted to either (1) discarding the critic's action gradient entirely and only using its value~\citep{ren2024diffusion, zhang2025energyweighted, mcallister2025flow}, or (2) distilling expressive, multi-step flow policies into one-step noise-conditioned approximations~\citep{park2025flow}. The former sacrifices learning efficiency and often under-performs methods that use the critic's action gradient~\citep{park2024value, park2025flow}, while the latter compromises expressivity. This raises a question: can we somehow keep the full expressivity of flow policies while incorporating the critic's action gradient directly into the denoising process without backpropagation instability?

One might be tempted to directly apply the critic's action gradient to intermediate noisy actions within the denoising process, as in diffusion classifier guidance (with the critic function being the classifier)~\citep{dhariwal2021diffusion}. Intuitively, this blends two generative processes together: one that generates a behavior action distribution, and another that hill-climbs the critic to maximize action value.
While this approach bypasses the backpropagation instability and retains full policy expressivity, it relies on the assumption that the critic's gradient at a noisy action is a good proxy for its gradient at the corresponding denoised action. In practice, this assumption often breaks down: when the offline dataset has limited action coverage, the critic is well-trained only on a narrow distribution of noiseless actions, rendering its gradients unreliable for intermediate noisy actions that are out of distribution. 

We propose \textbf{Q-learning with Adjoint Matching (QAM)}, a new RL algorithm that applies adjoint matching~\citep{domingo-enrich2025adjoint}, a recently developed technique in generative modeling, to policy optimization. QAM leverages the critic's action gradient for training flow policies to maximize returns subject to a prior constraint (\emph{e.g.}, behavior constraint) (\cref{fig:main}). 
In general, such a constrained optimization problem on a flow model can be formulated as a stochastic optimal control (SOC) objective, which can be solved by using the continuous adjoint method~\citep{pontryagin1962mathematical}. However, this standard formulation has the same loss landscape as directly backpropagating through the SOC objective, causing instability. Instead, we leverage a modified objective from \citet{domingo-enrich2025adjoint} that admits the same optimal solution, but does not suffer from the instability challenge. At a high level, the critic's gradient at noiseless actions is directly transformed by a flow model constructed from the prior, independent from the possibly ill-conditioned flow model that is being optimized, to construct unbiased gradient estimates for optimizing the state-conditioned velocity field at intermediate denoising steps. This allows the flow policy's velocity field to align directly with the optimal state-conditioned velocity field implied by the critic and the prior, without direct and potentially unstable backpropagation, while preserving the full expressivity of multi-step flow models. By combining this policy extraction procedure with a standard temporal-difference (TD) backup for critic learning, QAM enables the flow policy to efficiently converge to the optimal policy subject to the prior constraint. In contrast, approximation methods that rely on the critic's gradients at noisy intermediate actions lack such convergence guarantees.

Our main contribution is a new TD-based RL algorithm that leverages adjoint matching to perform policy extraction effectively on a critic. Unlike prior Q-learning methods with flow-matching that rely on approximations or throw away the action gradient of the critic altogether, our algorithm directly uses the gradient to form an objective that at convergence recovers the optimal behavior-regularized policy.
We conduct a comprehensive empirical study comparing policy extraction methods for flow/diffusion policies, including recent approaches and novel baselines, and show that QAM consistently achieves strong performance across both offline RL and offline-to-online RL benchmarks.


\vspace{-2mm}
\section{Related Work}
\vspace{-1mm}
\label{sec:related}
\textbf{RL with diffusion and flow policies.}
Diffusion and flow policies have been explored in both policy gradient methods~\citep{ren2024diffusion} and actor-critic methods~\citep{fang2025diffusion, edp_kang2023, dtql_chen2024, srpo_chen2024, qgpo_lu2023, qvpo_ding2024, dql_wang2023, diffcps_he2023, consistencyac_ding2024, srdp_ada2024, entropydql_zhang2024, hansen2023idql}. The key challenge of leveraging diffusion/flow policies in TD-based RL methods is to optimize these policies against the critic function (\emph{i.e.}, $Q(s, a)$). Prior work can be largely put into three categories based on how the value function is used: 

(1) \emph{Post-processing} approaches refine the action distribution from a base diffusion/flow policy with rejection sampling based on the critic value~\citep{hansen2023idql, mark2024policy, li2025reinforcement, dong2025expo}, or using additional gradient steps to hill climb the critic~\citep{mark2024policy} (\emph{i.e.}, $a_t \leftarrow a_t + \nabla_a Q(s, a)$). These approaches often reliably improve the quality of extracted policy but at the expense of additional computation during evaluation or even training (\emph{i.e.}, rejection sampling for value backup target~\citep{li2025reinforcement, dong2025expo}). Alternatively, one may train a residual policy that modifies a base behavior policy in either the noise space~\citep{singh2020parrot, wagenmaker2025steering} or in the action space directly~\citep{yuan2024policy, ankile2025residual, ankile2025imitation, dong2025expo}.

(2) \emph{Backprop-based} approaches perform direct backpropagation through both the critic and the policy~\citep{dql_wang2023, diffcps_he2023, ding2023consistency, entropydql_zhang2024, park2025flow, espinosa2025scaling, chen2025one}. While this is the most straightforward implementation, it requires backpropagation through the diffusion/flow policy's denoising process which has been observed to be unstable~\citep{park2025flow}, or instead learns a distilled policy~\citep{ding2023consistency, chen2024diffusion, park2025flow, espinosa2025scaling, chen2025one}, at the expense of policy expressivity. 

(3) \emph{Intermediate fine-tuning} approaches, which our method also belongs to, mitigate the need for the stability/expressivity trade-off in backprop-based approaches by leveraging the critic to construct an objective that provides direct step-wise supervision to the intermediate denoising process~\citep{psenka2023learning, fang2025diffusion, ding2024diffusion, li2024learning, frans2025diffusion, zhang2025energyweighted, ma2025efficient, koirala2025flow}. While these approaches remove the need for backpropagation through the denoising process completely, the challenge lies in carefully crafting the step-wise objective that does not introduce additional biases and learning instability. Compared to prior methods that either rely on approximations~\citep{lu2023contrastive, fang2025diffusion} that do not provide theoretical guarantees (see more discussions in \cref{appendix:cagapp}) or directly throw away the critic's action gradient (and use its value instead)~\citep{ding2024diffusion, zhang2025energyweighted, ma2025efficient, koirala2025flow}, we leverage adjoint matching~\citep{domingo-enrich2025adjoint} which allows us to use the critic's action gradient directly to construct a direct step-wise objective for our flow policy that recovers the optimal prior regularized policy at the optimum of the objective. 

\textbf{Offline-to-online reinforcement learning} methods focus on leveraging offline RL to first pre-train on an offline dataset, and then use the pretrained policy and value function(s) as initialization to accelerate online RL~\citep{xie2021policy, song2022hybrid, lee2022offline, NEURIPS2022_ba1c5356, zhang2023policy, zheng2023adaptive, ball2023efficient, nakamoto2024cal, li2024accelerating, wilcoxson2024leveraging, zhou2025efficient}. While it is possible to skip the offline pre-training phase altogether and use online RL methods directly by treating the offline dataset as additional off-policy data that is pre-loaded into the replay buffer~\citep{lee2022offline, song2022hybrid, ball2023efficient}, these methods often under-perform the methods that leverage explicit offline pre-training, especially on more challenging tasks~\citep{nakamoto2024cal, park2025flow}.
Our method also operates in this regime where we first perform offline RL pre-training and then perform online fine-tuning from the offline pre-trained initialization. In addition, we follow a common design in prior work where the same offline RL objective is used for both offline pre-training and online fine-tuning~\citep{kostrikov2021offline, fujimoto2021minimalist, tarasov2023revisiting, park2025flow}. While we focus on evaluating our method in the offline-to-online RL setting, the idea of using adjoint matching to train an expressive flow policy can be applied to other settings such as online RL.

For additional related work on diffusion and flow-matching with guidance, see \cref{appendix:cagapp}.

\vspace{-1mm}
\section{Preliminaries}
\vspace{-1mm}
\label{sec:background}
\textbf{Reinforcement learning and problem setup.} We consider a Markov Decision Process (MDP), $\mathcal{M} = (\mathcal{S}, \mathcal{A}, P,  \gamma, R, \mu)$, where $\mathcal{S}$ is the state space, $\mathcal{A} = \mathbb{R}^A$ ($A\in \mathbb{Z}^+$) is the action space, $P: \mathcal{S} \times \mathcal{A} \to \Delta_{\mathcal{S}}$ is the transition function, $\gamma \in [0, 1)$ is the discount factor, $R : \mathcal{S} \times \mathbb{R}^A \to \mathbb{R}$ is the reward function, and $\mu \in \Delta_{\mathcal{A}}$ is the initial state distribution. We have access to a dataset $D$ consisting of a set of transitions $D =\{(s_i, a_i, s'_i, r_i)\}_{i=1}^{|D|}$, where $s' \sim P(\cdot \mid s, a)$ and $r = R(s, a)$. Our first goal (\emph{offline RL}) is to learn a policy $\pi_\theta: \mathcal{S} \to \Delta_{\mathcal{A}}$ from $D$ that maximizes its expected discounted return,
\begin{align}
    U_\pi = \mathbb{E}_{s_0 \sim \mu, s_{k+1} \sim P(\cdot \mid s_k, a_k), a_k \sim \pi(\cdot \mid s_k)} \left[\sum_{k=0}^{\infty} \gamma^k R(s_k, a_k)\right].
\end{align}
The second goal (\emph{offline-to-online RL}) is to fine-tune the offline pre-trained policy $\pi_\theta$ by continuously interacting with the MDP through trajectory episodes with a task/environment dependent maximum episode length of $H$ (\emph{i.e.}, the maximum number of time steps before the agent is reset to $\mu$). The central challenge of offline-to-online RL is to maximally leverage the behavior prior in $D$ to learn as sample-efficiently (high return with few environment interactions) as possible online.

\textbf{Flow-matching generative model.} A flow model uses a time-variant velocity field $v: \mathbb{R}^d \times [0,1] \to \mathbb{R}^d$ to estimate the marginal distribution of a denoising process from noise, $X_0 = \mathcal{N}(0, I_d)$, to data, $X_1 = D$, at each intermediate time $t \in [0, 1]$:
\begin{align}
    X_t = (1-t) X_0 + tX_1.
\end{align}
In particular, the flow model approximates the intermediate $X_t$ via an ordinary differential equation (ODE) starting from the noise: $X_0 =\mathcal{N}$:
\begin{align}
    \dd \hat{X}_t = {f}(\hat{X}_t, t)\dd t.
\end{align}
Flow models are typically trained with a \emph{flow matching} objective~\citep{liu2022flow}:
\begin{align}
    L_{\mathrm{FM}}(\theta) = \mathbb{E}_{t \sim \mathcal{U}[0, 1], x_0 \sim \mathcal{N}, x_1\sim D}\left[\|{f_\theta}((1-t)x_0 + tx_1, t) - x_1 + x_0\|_2^2\right],
\end{align}
where any optimal velocity field, $v_{\theta^\star}$, results in $\hat{X}_t$ where its marginal distribution $p_f(x_t)$ exactly recovers the marginal distribution of the original denoising process $X_t$, $p_D(x_t)$, for each $t \in [0, 1]$~\citep{lipman2024flow}. Furthermore, one may use the Fokker-Planck equations to construct a family of stochastic differential equations (SDE) that admit the same marginals as well:
\begin{align}
\label{eq:eqsde}
    \dd \hat{X}_t = \left({f}(\hat{X}_t, t) + \frac{\sigma_t^2 t}{2(1-t)}\left({f}(\hat{X}_t, t) + X_t/t\right)\right) \dd t + \sigma_t \dd B_t
\end{align}
with $B_t$ being a Brownian motion and $\sigma_t > 0$ being any noise schedule.

\textbf{Adjoint matching} is a technique developed by \citet{domingo-enrich2025adjoint} with the goal of modifying a base flow-matching generative model $f_\beta$ such that the resulting flow model $f_\theta$ generates the following \emph{tilt} distribution:
\begin{align}
    p_\theta(X_1) \propto p_\beta(X_1) e^{{Q}(X_1)}
\end{align}
where $p_\theta$ is the tilt distribution induced by $f_\theta$ and $p_\beta$ is the base distribution induced by $f_\beta$, and {$Q: \mathbb{R}^d \to \mathbb{R}$ is any value function} that up-weights or down-weights the probability of each example in the domain $\mathbb{R}^d$. \citet{domingo-enrich2025adjoint} uses a marginal-preserving SDE with a `memory-less' noise schedule (\emph{i.e.}, $X_0$ and $X_1$ are independent), $\sigma_t =  \sqrt{2(1 - t) / t}$:~\footnote{The adjoint-matching paper~\citep{domingo-enrich2025adjoint} uses a more general framework with $X_t = \beta_t X_0 + \alpha_tX_1$. For simplicity, we assume the special case where $\alpha_t=t, \beta_t=1-t$, and $\sigma_t = \sqrt{2(1-t)/t}$.}
\begin{align}
\label{eq:mpsde}
    \dd X_t = \left(2 {f}(X_t, t) - X_t/t\right) \dd t + \sqrt{2(1 - t) / t} \dd B_t,
\end{align}
where the minimizer of the following stochastic optimal control equation (with $X_t$ sampling from the joint distribution defined by the SDE in \cref{eq:mpsde}),
\begin{align}
\label{eq:soc}
    L(\theta) = \mathbb{E}_{\mathbf X = \{X_t\}_t}\left[\int_0^1 \left(\frac{2}{\sigma_t^2}\|{f_\theta}(X_t, t) - {f_\beta}(X_t, t)\|_2^2 \right) \dd t - {Q}(X_1)\right]
\end{align}
gives the correct marginal \emph{tilt} distribution for $X_1$~\citep{NEURIPS2024_cc32ec39}:
\begin{align}
    p(X_1) \propto p_\beta(X_1) e^{{Q}(X_1)}.
\end{align}
Let the adjoint state be the gradient of the tilt function applied at the denoised $X_1$:
\begin{align}
    g(\mathbf X, t) = \nabla_{X_t} \left[\int_t^1 \left( {\frac{2}{\sigma^2_{t'}}} \|{f_\theta}(X_{t'}, t') - {f_\beta}(X_{t'}, t')\|_2^2 \right) \dd t' - {Q}(X_1) \right],
\end{align}
where $\mathbf X = \{X_t\}_t$ is the random variable that represents the SDE trajectory, and $g$ satisfies
\begin{align}
    \frac{\dd g(\mathbf X, t)}{\dd t} = - \nabla_{X_t} \left[2 {f_\theta}(X_t, t) - X_t/t\right] g(\mathbf X, t) - \frac{2}{\sigma^2_t} \nabla_{X_t} \|{f_\theta}(X_t, t) - {f_\beta}(X_t, t)\|_2^2,
\end{align}
with the boundary condition $g(\mathbf X, t=1) = -\nabla_{X_1} Q(X_1)$.
We can compute the adjoint states by stepping through the reverse ODE (which can be efficiently computed with the vector-Jacobian product (VJP) in most modern deep learning frameworks). Then, it can be shown that the stochastic optimal control loss in \cref{eq:soc} is equivalent to the `basic' adjoint matching objective below:
\begin{align}
\label{eq:bam}
    L_{\mathrm{BAM}}(\theta) = \mathbb{E}_{\mathbf X}\left[\int_0^1 \|2({f_\theta}(X_t, t) - {f_\beta}(X_t, t)) / \sigma_t + \sigma_t g(\mathbf X, t)\|_2^2 \dd t\right].
\end{align}

The optimal ${f_\theta}$ coincides with the optimal solution in the original SOC equation (\cref{eq:soc}), which gives the correct marginal distribution of $X_1$ as a result. However, the objective is equivalent to the objective used in the continuous adjoint method~\citep{pontryagin1962mathematical} with its gradient equal to that of backpropagation through the denoising process. Instead, \citet{domingo-enrich2025adjoint} derive the `lean' adjoint state where all the terms in the adjoint state that are zero at the optimum are removed from the original adjoint state. The `lean' adjoint state satisfies the following ODE:
\begin{align}
    \dd \tilde{g}(\mathbf X, t) = - \nabla_{X_t} \left[2 {f_\beta}(X_t, t) - X_t/t\right] \tilde{g}(\mathbf X, t) \dd t,
\end{align}
with the same boundary condition $\tilde{g}(\mathbf X, 1) = -\nabla_{X_1} {Q}(X_1)$.

Note that computing the `lean' adjoint state only requires the base flow model ${f_\beta(X_t, t)}$ and no longer needs to use ${f_\theta(X_t, t)}$ as needed in either the basic adjoint matching objective (\cref{eq:bam}) or naive backpropagation through the denoising process. The resulting adjoint matching objective is
\begin{align}
\label{eq:am}
    L_{\mathrm{AM}}(\theta) = \mathbb{E}_{\mathbf X}\left[\int_0^1 \|2({f_\theta}(X_t, t) - {f_\beta}(X_t, t)) / \sigma_t + \sigma_t \tilde{g}(\mathbf X, t)\|_2^2 \dd t\right],
\end{align}
where again $\mathbf X$ is sampled from the marginal-preserving SDE in \cref{eq:mpsde}. The terms omitted in the `lean' adjoint state are zero at the optimum and therefore do not change the optimal solution for ${f_\theta}$. Thus, the optimal solution for the adjoint matching gives the correct tilt distribution.

\section{Q-learning with Adjoint Matching (\ours{})}
In this section, we describe in detail how our method leverages adjoint matching to directly align the flow policy to the prior-regularized optimal policy without suffering from backpropagation instability. 

To start with, we first define the optimal policy that we want to learn as the solution of the following objective: 
\begin{align}
     {\arg\max}_\pi \mathbb{E}_{a \sim \pi(\cdot \mid s)}[Q(s, a)] \quad \mathrm{s.t.} \quad D_{\mathrm{KL}}\infdivx*{\pi}{\pi_\beta} \leq \epsilon(s).
\end{align}
or equivalently, for an appropriate $\tau(s)$, \citet{nair2020awac} show that
\begin{align}
\label{eq:opt-b}
    \pi^\star(a \mid s) \propto \pi_\beta(a \mid s) e^{\tau(s) Q_\phi(s, a)}
\end{align}
where $\tau : \mathcal{S} \to \mathbb{R}^+$ is the inverse temperature coefficient that controls the strength of the behavior constraint at each state.

We approximate the behavior policy using a flow-matching behavior policy, ${f_\beta} : \mathcal{S} \times \mathbb{R}^A \times [0, 1] \to \mathbb{R}^{A}$ that is optimized with the standard flow-matching objective:
\begin{align}
\label{eq:fm}
    L_{\mathrm{FM}}(\beta) = \mathbb{E}_{(s, a) \sim D, t \sim [0, 1], z \sim \mathcal{N}}\left[\|{f_\beta}(s, (1-t)z + ta, t) - a + z\|_2^2\right] 
\end{align}

We then parameterize our approximation of the optimal policy as another flow model ${f_\theta}: \mathcal{S} \times \mathbb{R}^A \times [0, 1] \to \mathbb{R}^{A}$ and solve the following SOC equation:
\begin{align}
\label{eq:psoc}
    L(\theta) = \mathbb{E}_{s \sim D, a_t} \left[\int_0^1 \frac{2}{\sigma_t^2}\|{f_\theta}(s, a_t, t) - f_\beta(s, a_t, t)\|_2^2 - \tau(s) Q_\phi(s, a_1)\dd t\right],
\end{align}
where $a_t$ is defined by the following `memory-less' SDE (\emph{e.g.}, $a_0$ is independent from $a_1$):
\begin{align}
    \dd a_t = (2{f_\theta}(s, a_t, t) - a_t/t) \dd t + \sqrt{2(1-t)/t} \dd B_t.
\end{align}
Similar to the derivation by \citet{domingo-enrich2025adjoint}, the memory-less property allows us to directly conclude that the SOC equation has the optimum at 
\begin{align}
    \pi_\theta(\cdot\mid s)  \propto \pi_\beta(\cdot \mid s) e^{\tau(s) Q_\phi(s, a)}
\end{align}
where $\pi_\theta(\cdot \mid s)$ and $\pi_\beta(\cdot \mid s)$ are the corresponding action distributions defined by ${f_\theta}$ and ${f_\beta}$. 

However, directly solving the SOC equation involves backpropagation through time that introduces additional instability. To circumvent this issue, we use the adjoint matching objective proposed by \citet{domingo-enrich2025adjoint} (\cref{eq:am}) to construct a similar objective for policy optimization in our case:
\begin{align}
\label{eq:qam}
    L_{\mathrm{AM}}(\theta) = \mathbb{E}_{s \sim D, \{a_t\}_t} \left[\int_0^1 \|2(f_\theta(s, a_t, t) - f_\beta(s, a_t, t)) / \sigma_t + \sigma_t \tilde{g}_t\|_2^2 \dd t\right]
\end{align}
where $\tilde{g}_t$ is the `lean' adjoint state defined by a reverse ODE constructed from $a_t$'s:
\begin{align}
\label{eq:leanadj-orig}
    \dd \tilde{g}_t= -\nabla_{a_t}\left[2{f_\beta}(s, a_t, t) - a_t/t\right] \tilde{g}_t \dd t.
\end{align}
Unlike the original SOC objective (\cref{eq:psoc}) from which calculating the gradient requires backpropagating through an SDE, which suffers from stability challenges, the adjoint matching objective is constructed without backpropagation. Instead, it uses the behavior velocity field ${f_\beta}$ to calculate the `lean' adjoint states $\{\tilde{g}_t\}_t$ through a series of VJPs for every SDE trajectory $\{a_t\}_t$, which are then used to form a squared loss in the adjoint matching objective. Mathematically, backpropagation can also be interpreted as calculating the adjoint states through a series of VJPs, with the key distinction that the VJPs are computed under the flow model that is being optimized (\emph{i.e.}, $f_\theta$). This is an important distinction because for direct backpropagation, any ill-conditioned action gradient in ${f_\theta}$ (\emph{i.e.}, $\nabla_{a} {f_\theta}(s, a, t)$) would compound over the entire denoising process, contributing to the `ill-conditioning' of the overall gradient to the parameter $\theta$, which can in turn destabilize the whole optimization process. In contrast, in adjoint matching, the action gradient of ${f_\theta}$ has no contribution to the overall gradient to $\theta$, which allows the optimization to be much more stable.

Finally, we combine the policy optimization with the standard TD-learning objective:
\begin{align}
    L(\phi) = \mathbb{E}_{s, a, s', r \sim D}\left[(Q(s, a) - r - \gamma Q_{\bar\phi}(s', a')\right], \quad a' \leftarrow \mathrm{ODE}({f}_\theta(s', \cdot, \cdot),  a'_0 \sim \mathcal{N})
\end{align}
where $\mathrm{ODE}({f}_\theta(s', \cdot, \cdot),  a'_0):= \int_0^1 f_\theta(s', a'_t, t) \dd t$  (sampling an action from the velocity field $f_\theta(s', \cdot, \cdot)$) and $\bar \phi$ is the exponential moving average of $\phi$ with a time-constant of $\lambda = 0.005$ (\emph{i.e.}, $\bar\phi_{i+1} \leftarrow (1-\lambda) \bar \phi_i + \lambda \phi_i$ for each training step $i$).

\textbf{Practical considerations.}
In practice, following \citet{domingo-enrich2025adjoint}, we solve both the SDE and the reverse ODE with discrete approximation and a fixed step size of $h = 1/T$, where $T$ is the number of discretization steps. In particular, with $a_0 \sim \mathcal{N}$ and $z_t\sim \mathcal{N}, \forall t\in \{0, h, \cdots (T-1)h \}$, the forward SDE process is approximated by
\begin{align}
\label{eq:mppsde}
    a_{t+h} \leftarrow a_t+ h \cdot (2{f_\theta}(s, a_t, t) - a_t/t)   + \sqrt{2h (1-t)/t} z_t.
\end{align}
We set the boundary condition as $\tilde{g}_1 = -\tau \nabla_{a_1} Q_\phi(s, a_1)$, where we use a state-independent inverse temperature coefficient $\tau$ to modulate the influence of the prior $\pi_\beta$ and {we additionally clip the magnitude of the parameter gradient element-wise by $1$ for numerical stability}. The backward adjoint state calculation process is then approximated by
\begin{align}
\label{eq:leanadj}
    \tilde{g}_{t-h} \leftarrow \tilde{g}_t+  h \cdot \mathrm{VJP}(\nabla_{a_t}(2 {f_\beta}(s, a_t, t) - a_t/t), \tilde{g}_t),
\end{align}
with $\mathrm{VJP}(\nabla_{\mathbf y} b(\mathbf y), \mathbf x) = [\nabla_{\mathbf y} b(\mathbf y)] \mathbf x$ being the vector-Jacobian product and it can be practically implemented by carrying the `gradient' $x$ with backpropagation through ${f}$.
For the critic, we use an ensemble of $K=10$ critic functions $\phi^1, \cdots, \phi^K$ and use the pessimistic target value backup following \citet{fang2024diffusion} (originally inspired by \citet{ghasemipour2022so}). The loss function for each $\phi^j$, $j \in \{1, 2, \cdots, K\}$ is
\begin{align}
\label{eq:pvbk}
    L(\phi^j) = \left(Q_{\phi^j}(s, a) - r - \gamma \left[\bar Q_{\mathrm{mean}}(s', a') - \rho\bar  Q_{\mathrm{std}}(s', a') \right]\right)^2,
\end{align}
where $\bar Q_{\mathrm{mean}}(s', a') := \frac{1}{K}\sum_k Q_{\bar \phi^k}(s', a'), \bar Q_{\mathrm{std}}(s', a') = \frac{1}{K}\sqrt{\sum_k (Q_{\bar \phi^k}(s', a') - \bar Q_{\mathrm{mean}}(s', a'))^2}$, and $a'$ is the action sampled from the fine-tuned flow model: ${f_\theta}(s, \cdot, \cdot)$.
For all our experiments, we do not use a separate training process for ${f_\beta}$ and instead train it at the same time as ${f_\theta}$ and $Q_\phi$, following \citet{park2025flow, li2025reinforcement}. For all our loss functions, the transition tuple $(s, a, s', r)$ is drawn from $D$ uniformly. During offline training, $D$ is the offline data. During online fine-tuning, $D$ is a combination of the offline and online replay buffer data without any re-weighting. A pseudocode of our algorithm is available in \cref{algo:qam} in the Appendix.

\textbf{Theoretical guarantees.} As our algorithm builds off from \citet{domingo-enrich2025adjoint}, we can directly extend their theoretical results to our setting (see \cref{appendix:theory})---as long as the loss function $L_{\mathrm{AM}}$ is optimized to convergence (\emph{i.e.}, with $\partial L / \partial f_\theta = 0$), the learned policy \emph{coincides} with the optimal behavior-constrained policy (\emph{i.e.}, $\pi^\star(a \mid s) \propto \pi_\beta(a \mid s) \exp(\tau Q(s, a))$ in \cref{eq:opt-b}).

{\textbf{Expanding the constraint set beyond the KL behavior constraint.} While our method, QAM, is guaranteed to converge to the optimal behavior-constrained policy (\emph{i.e.}, $\propto \pi_\beta\exp(Q)$), it can struggle to represent any policy that has a support mismatch with the behavior policy. For tasks where the optimal actions have extremely low probability under the behavior distribution, our algorithm may have trouble representing an optimal policy for these tasks. In practice, we often find it to be beneficial to relax the behavior constraint in QAM to allow actions that are `close' (in the action space) to the behavior actions. This amounts to optimizing the policy under the following constraint:
\begin{align}
    {\arg\max}_{\pi, \tilde \pi} \mathbb{E}_{a \sim \pi(\cdot \mid s)}[Q(s, a)] \quad \mathrm{s.t.} \quad D_{\mathrm{KL}}\infdivx*{\tilde \pi}{\pi_\beta} \leq \epsilon(s), W_q(\pi, \tilde \pi) \leq \sigma_a,
\end{align}
where $W$ is the $q$-Wasserstein distance between $\pi, \tilde \pi$ under some metric $d: \mathcal{A} \times \mathcal{A} \mapsto \mathbb{R}^+_0$, and $\sigma_a$ is some constant that controls the strength of the constraint:
\begin{align}
    W_q(\pi(\cdot \mid s), \tilde \pi(\cdot \mid s)) := \inf_{c \in \mathcal{C}(\pi, \tilde \pi)} \left[\mathbb{E}_{a, \tilde a \sim c(a, \tilde a \mid s)}\left[d(a, \tilde a)^q\right]^{1/q}\right],
\end{align}
where $\mathcal{C}(\pi, \tilde \pi)$ is the set of all couplings between $\pi$ and $\tilde \pi$.
With this design, $\pi$ not only respects the KL behavior constraint, but can also represent actions that are `close' (in the geometry of the action space) to the behavior policy. While directly optimizing this objective is intractable, we approximate it by using the solution from QAM (\emph{i.e.}, $\pi_\theta$) to approximate the optimal $\tilde \pi$, allowing us to transpose the problem into a behavior-constrained policy optimization problem with $\pi_\theta$ being the base policy: 
\begin{equation}
\begin{aligned}
    &\pi^\star \approx \pi_\omega = {\arg\max}_{\pi} \mathbb{E}_{a \sim \pi(\cdot \mid s)}[Q(s, a)] \quad \mathrm{s.t.} \quad W_q(\pi(a\mid s), \pi_\theta(a\mid s)) \leq \sigma_a.
\end{aligned}
\end{equation}
In practice, we explore two practical variants for optimizing $\pi$ with different choices of $q$ and $d$ and approximations.

\textit{The first variant,} \texttt{QAM-FQL}: For our first variant, we use $q=2$ and $d(a, \tilde a) = \| a - \tilde a\|_2$, which allows us to directly use a prior flow RL method, FQL~\citep{park2025flow}. 
In particular, we learn a 1-step noise-conditioned policy, $\pi_\omega$ (as represented by $\mu_\omega(s, z): \mathcal{S} \times \mathbb{R}^A \mapsto \mathbb{R}^A$), and optimize the following objective (with $\alpha$ being a tunable BC coefficient),
\begin{align}
    L_{\texttt{QAM-FQL}}(\omega) = \mathbb{E}_{z \sim \mathcal{N}}\left[-Q_\phi(s, \mu_\omega(s, z)) + \alpha\|\mu_\omega(s, z) - \mathrm{ODE}(f_\theta(s, \cdot, \cdot), z) \|_2^2 \right].
\end{align}
As a direct application of the theoretical result in \citet{park2025flow}, the second term in the equation above is an upper bound on squared Wasserstein distance $W^2_2(\pi_\omega, \pi_\theta)$ between the 1-step noise-conditioned policy $\pi_\omega$ and the optimal behavior-constrained policy obtained via QAM, $\pi_\theta$. This is exactly the desired constraint (\emph{i.e.}, $W^2_2(\pi_\omega, \pi_\theta) \leq \epsilon_W$). We then use this 1-step policy $\pi_\omega$ both to interact with the environment and to compute value targets (\emph{i.e.}, $\bar Q(s', a'=\mu(s, z \sim \mathcal{N}))$). Intuitively, the 1-step policy is optimized to remain close to the QAM-fine-tuned flow policy while also maximizing the action value under the current critic. We call this first variant of our method \texttt{QAM-FQL}.

\textit{The second variant,} \texttt{QAM-EDIT}: Our second variant uses $q=1$ and $d(a, \tilde a) = \|a - \tilde a\|_\infty$. To implement this, we learn an edit policy $\pi_\omega(\Delta_a\mid s, a)$ that predicts an action edit $\Delta_a$ to modify the action output from $\pi_\theta$ by at most $\sigma_a$ in the $L_\infty$ distance (\emph{i.e.}, $\|\Delta_a\|_\infty \leq \sigma_a$). This ensures that the Wasserstein distance constraint (\emph{i.e.}, $W_1(\pi_\omega, \pi_\theta) \leq \sigma_a$) is enforced by construction. In practice, we train an edit policy~\citep{dong2025expo}, $\pi_\omega(\cdot \mid s, \tilde a)$, where it first predicts a Gaussian distribution, then is squashed by a tanh function to between $-1$ and $1$, and finally rescaled to be between $-\sigma_a$ and $\sigma_a$ as the action edit. To avoid the degenerate case where the edit policy always outputs the boundary action edits (\emph{i.e.}, $-\sigma_a$ or $\sigma_a$), we also include the standard entropy term from \citet{haarnoja2018soft} with the automatic entropy tuning trick such that the policy's entropy is close to a constant value of $\mathcal{H}(\pi_\omega) = \mathcal{H}_{\mathrm{target}} = -A/2$.
\begin{equation}
\begin{aligned}
    L_{\texttt{QAM-EDIT}}(\omega) &= \mathbb{E}_{\Delta_a \sim \pi_{\omega}(\cdot \mid s, a)}\left[-Q_\phi(s, \Delta_a + a) + \eta \log \pi_{\omega}(\Delta_a \mid s, a) \right],\\
    L(\eta) &= \eta \left[\mathbb{E}_{\Delta_a \sim \pi_{\omega}(\cdot \mid s, a)}\left[\log \pi_\omega(\Delta_a \mid s, a) \right]  - \mathcal{H}_{\mathrm{target}}\right],
\end{aligned}
\end{equation}
where $a \sim \pi_\theta(\cdot \mid s)$ for both losses and again $\Delta_a$ is restricted to be $\|\Delta_a\|_\infty \leq \sigma_a$. Similarly to our first variant, we use $\pi_\omega$ both to interact with the environment and to compute value targets. A potential side benefit of the entropy term is that it can also encourage action diversity that can be helpful for online exploration and fine-tuning. As we will show in our experiments, this QAM variant not only excels in offline RL but also fine-tunes effectively online.

\vspace{-2mm}
\section{Experiments}
\vspace{-1mm}
\label{sec:results}
We conduct experiments to evaluate the effectiveness of our method on a range of long-horizon, sparse-reward domains and compare it against a set of representative baselines.

\textbf{Domains and datasets.} {
We consider 10 domains (5 tasks each) from OGBench~\citep{ogbench_park2024}: \texttt{scene}, \texttt{puzzle-3x3} (\texttt{p33}), \texttt{puzzle-4x4} (\texttt{p44}), \texttt{cube-double} (\texttt{c2}), \texttt{cube-triple} (\texttt{c3}), \texttt{cube-quadruple} (\texttt{c4}), 
\texttt{humanoidmaze-medium} (\texttt{hm}), \texttt{humanoidmaze-large} (\texttt{hl}), 
\texttt{antmaze-large} (\texttt{al}), and \texttt{antmaze-giant} (\texttt{ag}). For \texttt{antmaze-*} and \texttt{humanoidmaze-*}, we use the default \texttt{navigate} datasets. For \texttt{scene}, \texttt{puzzle-*}, and \texttt{cube-*}, we use the default \texttt{play} datasets except for \texttt{c4} and \texttt{p44} where we use the larger 100M-size dataset from \citet{park2025horizon}, and use the sparse reward definition for \texttt{\{p33, p44, scene\}}, following \citet{li2025reinforcement}.} 
In addition, for all {\texttt{\{cube-*, scene-*, p33-*, p44-*\}}} domains, we follow \citet{li2025reinforcement} to learn action chunking policies with an action chunking size of $h=5$. Action chunking policies output high-dimensional actions that exhibit a much more complex behavior distribution, where the policy extraction becomes critical. Since our approach primarily focuses on the policy extraction aspect, these domains make an ideal testbed for comparing our method to prior work. See \cref{appendix:domains} for more details on these domains.

\textbf{Comparisons.} 
We carefully select 13 representative, strong baselines that can be roughly categorized into the following 5 categories---
\textbf{(1) Gaussian:} ReBRAC~\citep{tarasov2023revisiting}, \textbf{(2) Backprop:} FBRAC~\citep{park2025flow} (backprop through the flow policy's denoising step directly), FQL (backprop through a 1-step distilled policy)~\citep{park2025flow};  \textbf{(3) Advantage-weighted:} FAWAC~\citep{park2025flow} (advantage weighted actor critic, AWAC~\citep{nair2020awac}, with flow policy); 
\textbf{(4) Guidance with the critic's action gradient:}
DAC~\citep{fang2025diffusion}, QSM~\citep{psenka2023learning}, and 
CGQL/CGQL-MSE/CGQL-Linex (three variants of classifier guidance-based methods inspired by \citet{dhariwal2021diffusion}); \textbf{(5) Post-processing-based:} DSRL~\citep{wagenmaker2025steering}, FEdit (flow + Gaussian edit policy from \citet{dong2025expo}), and IFQL (flow counterpart of IDQL~\citep{hansen2023idql}).
For offline-to-online evaluations we additionally compare with RLPD~\citep{ball2023efficient}.
Finally, we compare with BAM, a direct ablation from our method QAM where we use the `basic' adjoint matching (and hence the abbreviation BAM) objective in \cref{eq:bam} instead of the adjoint matching objective in \cref{eq:am}, and keep the rest of the implementation exactly the same. We categorize it as a ``backprop'' method because its gradient is equivalent to that of backpropagating through the memory-less SDE as we discuss above in \cref{sec:background}.

Among them, RLPD does not employ any behavior constraint, so we directly train from scratch online. To make the comparison fair, we use $K=10$ critic networks, pessimistic value backup with $\rho = 0.5$ (except on \texttt{humanoidmaze-large} where we find $\rho=0$ to work better), no best-of-N sampling (\emph{i.e.}, $N=1$) for both our methods and our baselines except for IFQL where best-of-N is used for policy extraction. See \cref{appendix:baselines} for detailed description and implementation details for each of the baselines. We also include the domain-specific hyperparameters for each baseline in \cref{appendix:hyperparameters}.

\begin{table}[t]
    \centering
\scalebox{0.745}{
\begin{tabular}{clccccccccccc}
\toprule
 &  & \texttt{al} & \texttt{ag} & \texttt{hm} & \texttt{hl} & \texttt{scene} & \texttt{p33} & \texttt{p44} & \texttt{c2} & \texttt{c3} & \texttt{c4} & \texttt{all} \\
 &  & \tiny\texttt{5 tasks} & \tiny\texttt{5 tasks} & \tiny\texttt{5 tasks} & \tiny\texttt{5 tasks} & \tiny\texttt{5 tasks} & \tiny\texttt{5 tasks} & \tiny\texttt{5 tasks} & \tiny\texttt{5 tasks} & \tiny\texttt{5 tasks} & \tiny\texttt{5 tasks} & \tiny\texttt{50 tasks} \\
\midrule
\rowcolor{ourlightblue}\textsc{{Gaussian}} & {\texttt{ReBRAC}} & $\overset{\phantom{0}\scaleto{[94,95]}{3pt}}{\phantom{0}{\color{ourblue}\mathbf{94}}}$ & $\overset{\phantom{0}\scaleto{[53,60]}{3pt}}{\phantom{0}{\color{ourblue}\mathbf{57}}}$ & $\overset{\phantom{0}\scaleto{[65,74]}{3pt}}{\phantom{0}{\color{gray!60}\mathbf{69}}}$ & $\overset{\phantom{0}\scaleto{[15,19]}{3pt}}{\phantom{0}{\color{gray!60}\mathbf{17}}}$ & $\overset{\phantom{0}\scaleto{[61,69]}{3pt}}{\phantom{0}{\color{gray!60}\mathbf{65}}}$ & $\overset{\phantom{0}\scaleto{[73,84]}{3pt}}{\phantom{0}{\color{gray!60}\mathbf{79}}}$ & $\overset{\phantom{0}\phantom{0}\scaleto{[0,0]}{3pt}}{\phantom{0}\phantom{0}{\color{gray!60}\mathbf{0}}}$ & $\overset{\phantom{0}\phantom{0}\scaleto{[8,10]}{3pt}}{\phantom{0}\phantom{0}{\color{gray!60}\mathbf{9}}}$ & $\overset{\phantom{0}\phantom{0}\scaleto{[0,1]}{3pt}}{\phantom{0}\phantom{0}{\color{gray!60}\mathbf{1}}}$ & $\overset{\phantom{0}\phantom{0}\scaleto{[6,11]}{3pt}}{\phantom{0}\phantom{0}{\color{gray!60}\mathbf{9}}}$ & \cellcolor{ourblue!10}$\overset{\phantom{0}\scaleto{[39,41]}{3pt}}{\phantom{0}{\color{gray!60}\mathbf{40}}}$ \\
\midrule
\rowcolor{ourlightblue} & {\texttt{FBRAC}} & $\overset{\phantom{0}\phantom{0}\scaleto{[1,4]}{3pt}}{\phantom{0}\phantom{0}{\color{gray!60}\mathbf{2}}}$ & $\overset{\phantom{0}\phantom{0}\scaleto{[0,0]}{3pt}}{\phantom{0}\phantom{0}{\color{gray!60}\mathbf{0}}}$ & $\overset{\phantom{0}\scaleto{[37,41]}{3pt}}{\phantom{0}{\color{gray!60}\mathbf{39}}}$ & $\overset{\phantom{0}\phantom{0}\scaleto{[0,0]}{3pt}}{\phantom{0}\phantom{0}{\color{gray!60}\mathbf{0}}}$ & $\overset{\phantom{0}\scaleto{[43,57]}{3pt}}{\phantom{0}{\color{gray!60}\mathbf{50}}}$ & $\overset{\phantom{0}\phantom{0}\scaleto{[0,1]}{3pt}}{\phantom{0}\phantom{0}{\color{gray!60}\mathbf{0}}}$ & $\overset{\phantom{0}\scaleto{[12,19]}{3pt}}{\phantom{0}{\color{gray!60}\mathbf{15}}}$ & $\overset{\phantom{0}\phantom{0}\scaleto{[0,0]}{3pt}}{\phantom{0}\phantom{0}{\color{gray!60}\mathbf{0}}}$ & $\overset{\phantom{0}\phantom{0}\scaleto{[0,1]}{3pt}}{\phantom{0}\phantom{0}{\color{gray!60}\mathbf{0}}}$ & $\overset{\phantom{0}\phantom{0}\scaleto{[0,0]}{3pt}}{\phantom{0}\phantom{0}{\color{gray!60}\mathbf{0}}}$ & \cellcolor{ourblue!10}$\overset{\phantom{0}\scaleto{[10,12]}{3pt}}{\phantom{0}{\color{gray!60}\mathbf{11}}}$ \\
\rowcolor{ourlightblue} & {\texttt{BAM}} & $\overset{\phantom{0}\scaleto{[82,85]}{3pt}}{\phantom{0}{\color{gray!60}\mathbf{84}}}$ & $\overset{\phantom{0}\phantom{0}\scaleto{[0,2]}{3pt}}{\phantom{0}\phantom{0}{\color{gray!60}\mathbf{1}}}$ & $\overset{\phantom{0}\scaleto{[58,62]}{3pt}}{\phantom{0}{\color{gray!60}\mathbf{60}}}$ & $\overset{\phantom{0}\phantom{0}\scaleto{[4,8]}{3pt}}{\phantom{0}\phantom{0}{\color{gray!60}\mathbf{5}}}$ & $\overset{\phantom{0}\scaleto{[97,99]}{3pt}}{\phantom{0}{\color{ourblue}\mathbf{98}}}$ & $\overset{\phantom{0}\scaleto{[48,64]}{3pt}}{\phantom{0}{\color{gray!60}\mathbf{56}}}$ & $\overset{\phantom{0}\phantom{0}\scaleto{[0,0]}{3pt}}{\phantom{0}\phantom{0}{\color{gray!60}\mathbf{0}}}$ & $\overset{\phantom{0}\scaleto{[44,50]}{3pt}}{\phantom{0}{\color{gray!60}\mathbf{47}}}$ & $\overset{\phantom{0}\phantom{0}\scaleto{[2,5]}{3pt}}{\phantom{0}\phantom{0}{\color{gray!60}\mathbf{3}}}$ & $\overset{\phantom{0}\phantom{0}\scaleto{[0,0]}{3pt}}{\phantom{0}\phantom{0}{\color{gray!60}\mathbf{0}}}$ & \cellcolor{ourblue!10}$\overset{\phantom{0}\scaleto{[34,36]}{3pt}}{\phantom{0}{\color{gray!60}\mathbf{35}}}$ \\
\rowcolor{ourlightblue}\multirow{-4}{*}{\textsc{{Backprop}}} & {\texttt{FQL}} & $\overset{\phantom{0}\scaleto{[72,79]}{3pt}}{\phantom{0}{\color{gray!60}\mathbf{76}}}$ & $\overset{\phantom{0}\phantom{0}\scaleto{[0,0]}{3pt}}{\phantom{0}\phantom{0}{\color{gray!60}\mathbf{0}}}$ & $\overset{\phantom{0}\scaleto{[63,73]}{3pt}}{\phantom{0}{\color{gray!60}\mathbf{68}}}$ & $\overset{\phantom{0}\phantom{0}\scaleto{[7,11]}{3pt}}{\phantom{0}\phantom{0}{\color{gray!60}\mathbf{9}}}$ & $\overset{\phantom{0}\scaleto{[77,80]}{3pt}}{\phantom{0}{\color{gray!60}\mathbf{78}}}$ & $\overset{\phantom{0}\scaleto{[60,78]}{3pt}}{\phantom{0}{\color{gray!60}\mathbf{70}}}$ & $\overset{\phantom{0}\phantom{0}\scaleto{[3,7]}{3pt}}{\phantom{0}\phantom{0}{\color{gray!60}\mathbf{5}}}$ & $\overset{\phantom{0}\scaleto{[43,49]}{3pt}}{\phantom{0}{\color{gray!60}\mathbf{46}}}$ & $\overset{\phantom{0}\phantom{0}\scaleto{[2,4]}{3pt}}{\phantom{0}\phantom{0}{\color{gray!60}\mathbf{3}}}$ & $\overset{\phantom{0}\phantom{0}\scaleto{[1,5]}{3pt}}{\phantom{0}\phantom{0}{\color{gray!60}\mathbf{2}}}$ & \cellcolor{ourblue!10}$\overset{\phantom{0}\scaleto{[34,37]}{3pt}}{\phantom{0}{\color{gray!60}\mathbf{36}}}$ \\
\midrule
\rowcolor{ourlightblue}\textsc{{Adv. Weighted}} & {\texttt{FAWAC}} & $\overset{\phantom{0}\scaleto{[15,19]}{3pt}}{\phantom{0}{\color{gray!60}\mathbf{17}}}$ & $\overset{\phantom{0}\phantom{0}\scaleto{[0,0]}{3pt}}{\phantom{0}\phantom{0}{\color{gray!60}\mathbf{0}}}$ & $\overset{\phantom{0}\scaleto{[22,26]}{3pt}}{\phantom{0}{\color{gray!60}\mathbf{24}}}$ & $\overset{\phantom{0}\phantom{0}\scaleto{[0,0]}{3pt}}{\phantom{0}\phantom{0}{\color{gray!60}\mathbf{0}}}$ & $\overset{\phantom{0}\scaleto{[35,41]}{3pt}}{\phantom{0}{\color{gray!60}\mathbf{38}}}$ & $\overset{\phantom{0}\phantom{0}\scaleto{[2,3]}{3pt}}{\phantom{0}\phantom{0}{\color{gray!60}\mathbf{3}}}$ & $\overset{\phantom{0}\phantom{0}\scaleto{[0,0]}{3pt}}{\phantom{0}\phantom{0}{\color{gray!60}\mathbf{0}}}$ & $\overset{\phantom{0}\phantom{0}\scaleto{[2,2]}{3pt}}{\phantom{0}\phantom{0}{\color{gray!60}\mathbf{2}}}$ & $\overset{\phantom{0}\phantom{0}\scaleto{[0,0]}{3pt}}{\phantom{0}\phantom{0}{\color{gray!60}\mathbf{0}}}$ & $\overset{\phantom{0}\phantom{0}\scaleto{[0,0]}{3pt}}{\phantom{0}\phantom{0}{\color{gray!60}\mathbf{0}}}$ & \cellcolor{ourblue!10}$\overset{\phantom{0}\phantom{0}\scaleto{[8,9]}{3pt}}{\phantom{0}\phantom{0}{\color{gray!60}\mathbf{8}}}$ \\
\midrule
\rowcolor{ourlightblue} & {\texttt{CGQL}} & $\overset{\phantom{0}\scaleto{[73,80]}{3pt}}{\phantom{0}{\color{gray!60}\mathbf{76}}}$ & $\overset{\phantom{0}\phantom{0}\scaleto{[0,2]}{3pt}}{\phantom{0}\phantom{0}{\color{gray!60}\mathbf{0}}}$ & $\overset{\phantom{0}\scaleto{[57,62]}{3pt}}{\phantom{0}{\color{gray!60}\mathbf{60}}}$ & $\overset{\phantom{0}\phantom{0}\scaleto{[4,5]}{3pt}}{\phantom{0}\phantom{0}{\color{gray!60}\mathbf{5}}}$ & $\overset{\phantom{0}\scaleto{[36,40]}{3pt}}{\phantom{0}{\color{gray!60}\mathbf{38}}}$ & $\overset{\phantom{0}\scaleto{[40,55]}{3pt}}{\phantom{0}{\color{gray!60}\mathbf{48}}}$ & $\overset{\phantom{0}\scaleto{[16,33]}{3pt}}{\phantom{0}{\color{gray!60}\mathbf{24}}}$ & $\overset{\phantom{0}\scaleto{[36,41]}{3pt}}{\phantom{0}{\color{gray!60}\mathbf{38}}}$ & $\overset{\phantom{0}\phantom{0}\scaleto{[7,9]}{3pt}}{\phantom{0}\phantom{0}{\color{ourblue}\mathbf{8}}}$ & $\overset{\phantom{0}\phantom{0}\scaleto{[0,0]}{3pt}}{\phantom{0}\phantom{0}{\color{gray!60}\mathbf{0}}}$ & \cellcolor{ourblue!10}$\overset{\phantom{0}\scaleto{[29,31]}{3pt}}{\phantom{0}{\color{gray!60}\mathbf{30}}}$ \\
\rowcolor{ourlightblue} & {\texttt{CGQL-M}} & $\overset{\phantom{0}\scaleto{[68,73]}{3pt}}{\phantom{0}{\color{gray!60}\mathbf{71}}}$ & $\overset{\phantom{0}\phantom{0}\scaleto{[1,8]}{3pt}}{\phantom{0}\phantom{0}{\color{gray!60}\mathbf{4}}}$ & $\overset{\phantom{0}\scaleto{[40,43]}{3pt}}{\phantom{0}{\color{gray!60}\mathbf{42}}}$ & $\overset{\phantom{0}\phantom{0}\scaleto{[3,8]}{3pt}}{\phantom{0}\phantom{0}{\color{gray!60}\mathbf{6}}}$ & $\overset{\phantom{0}\scaleto{[72,76]}{3pt}}{\phantom{0}{\color{gray!60}\mathbf{74}}}$ & $\overset{\scaleto{[100,100]}{3pt}}{{\color{ourblue}\mathbf{100}}}$ & $\overset{\phantom{0}\phantom{0}\scaleto{[0,0]}{3pt}}{\phantom{0}\phantom{0}{\color{gray!60}\mathbf{0}}}$ & $\overset{\phantom{0}\scaleto{[39,43]}{3pt}}{\phantom{0}{\color{gray!60}\mathbf{41}}}$ & $\overset{\phantom{0}\phantom{0}\scaleto{[7,9]}{3pt}}{\phantom{0}\phantom{0}{\color{ourblue}\mathbf{8}}}$ & $\overset{\phantom{0}\phantom{0}\scaleto{[0,1]}{3pt}}{\phantom{0}\phantom{0}{\color{gray!60}\mathbf{1}}}$ & \cellcolor{ourblue!10}$\overset{\phantom{0}\scaleto{[34,35]}{3pt}}{\phantom{0}{\color{gray!60}\mathbf{35}}}$ \\
\rowcolor{ourlightblue} & {\texttt{CGQL-L}} & $\overset{\phantom{0}\scaleto{[62,67]}{3pt}}{\phantom{0}{\color{gray!60}\mathbf{65}}}$ & $\overset{\phantom{0}\phantom{0}\scaleto{[1,6]}{3pt}}{\phantom{0}\phantom{0}{\color{gray!60}\mathbf{3}}}$ & $\overset{\phantom{0}\scaleto{[57,68]}{3pt}}{\phantom{0}{\color{gray!60}\mathbf{62}}}$ & $\overset{\phantom{0}\phantom{0}\scaleto{[5,8]}{3pt}}{\phantom{0}\phantom{0}{\color{gray!60}\mathbf{6}}}$ & $\overset{\phantom{0}\scaleto{[85,91]}{3pt}}{\phantom{0}{\color{gray!60}\mathbf{88}}}$ & $\overset{\phantom{0}\scaleto{[83,96]}{3pt}}{\phantom{0}{\color{gray!60}\mathbf{90}}}$ & $\overset{\phantom{0}\phantom{0}\scaleto{[0,0]}{3pt}}{\phantom{0}\phantom{0}{\color{gray!60}\mathbf{0}}}$ & $\overset{\phantom{0}\scaleto{[43,47]}{3pt}}{\phantom{0}{\color{gray!60}\mathbf{45}}}$ & $\overset{\phantom{0}\phantom{0}\scaleto{[7,9]}{3pt}}{\phantom{0}\phantom{0}{\color{ourblue}\mathbf{8}}}$ & $\overset{\phantom{0}\phantom{0}\scaleto{[0,1]}{3pt}}{\phantom{0}\phantom{0}{\color{gray!60}\mathbf{0}}}$ & \cellcolor{ourblue!10}$\overset{\phantom{0}\scaleto{[36,38]}{3pt}}{\phantom{0}{\color{gray!60}\mathbf{37}}}$ \\
\rowcolor{ourlightblue} & {\texttt{DAC}} & $\overset{\phantom{0}\scaleto{[86,90]}{3pt}}{\phantom{0}{\color{gray!60}\mathbf{88}}}$ & $\overset{\phantom{0}\scaleto{[11,20]}{3pt}}{\phantom{0}{\color{gray!60}\mathbf{16}}}$ & $\overset{\phantom{0}\scaleto{[81,85]}{3pt}}{\phantom{0}{\color{ourblue}\mathbf{83}}}$ & $\overset{\phantom{0}\phantom{0}\scaleto{[0,0]}{3pt}}{\phantom{0}\phantom{0}{\color{gray!60}\mathbf{0}}}$ & $\overset{\phantom{0}\scaleto{[65,70]}{3pt}}{\phantom{0}{\color{gray!60}\mathbf{68}}}$ & $\overset{\phantom{0}\scaleto{[62,75]}{3pt}}{\phantom{0}{\color{gray!60}\mathbf{68}}}$ & $\overset{\phantom{0}\phantom{0}\scaleto{[0,0]}{3pt}}{\phantom{0}\phantom{0}{\color{gray!60}\mathbf{0}}}$ & $\overset{\phantom{0}\scaleto{[33,36]}{3pt}}{\phantom{0}{\color{gray!60}\mathbf{35}}}$ & $\overset{\phantom{0}\phantom{0}\scaleto{[3,6]}{3pt}}{\phantom{0}\phantom{0}{\color{ourblue}\mathbf{5}}}$ & $\overset{\phantom{0}\phantom{0}\scaleto{[1,5]}{3pt}}{\phantom{0}\phantom{0}{\color{gray!60}\mathbf{3}}}$ & \cellcolor{ourblue!10}$\overset{\phantom{0}\scaleto{[35,38]}{3pt}}{\phantom{0}{\color{gray!60}\mathbf{36}}}$ \\
\rowcolor{ourlightblue}\multirow{-7}{*}{\textsc{{Guidance}}} & {\texttt{QSM}} & $\overset{\phantom{0}\scaleto{[88,92]}{3pt}}{\phantom{0}{\color{gray!60}\mathbf{90}}}$ & $\overset{\phantom{0}\scaleto{[19,29]}{3pt}}{\phantom{0}{\color{gray!60}\mathbf{24}}}$ & $\overset{\phantom{0}\scaleto{[79,84]}{3pt}}{\phantom{0}{\color{gray!60}\mathbf{82}}}$ & $\overset{\phantom{0}\phantom{0}\scaleto{[5,7]}{3pt}}{\phantom{0}\phantom{0}{\color{gray!60}\mathbf{6}}}$ & $\overset{\phantom{0}\scaleto{[77,80]}{3pt}}{\phantom{0}{\color{gray!60}\mathbf{78}}}$ & $\overset{\phantom{0}\scaleto{[52,63]}{3pt}}{\phantom{0}{\color{gray!60}\mathbf{57}}}$ & $\overset{\phantom{0}\phantom{0}\scaleto{[0,0]}{3pt}}{\phantom{0}\phantom{0}{\color{gray!60}\mathbf{0}}}$ & $\overset{\phantom{0}\scaleto{[31,34]}{3pt}}{\phantom{0}{\color{gray!60}\mathbf{33}}}$ & $\overset{\phantom{0}\phantom{0}\scaleto{[5,6]}{3pt}}{\phantom{0}\phantom{0}{\color{ourblue}\mathbf{6}}}$ & $\overset{\phantom{0}\scaleto{[18,19]}{3pt}}{\phantom{0}{\color{ourblue}\mathbf{19}}}$ & \cellcolor{ourblue!10}$\overset{\phantom{0}\scaleto{[38,40]}{3pt}}{\phantom{0}{\color{gray!60}\mathbf{39}}}$ \\
\midrule
\rowcolor{ourlightblue} & {\texttt{DSRL}} & $\overset{\phantom{0}\scaleto{[56,66]}{3pt}}{\phantom{0}{\color{gray!60}\mathbf{61}}}$ & $\overset{\phantom{0}\phantom{0}\scaleto{[1,4]}{3pt}}{\phantom{0}\phantom{0}{\color{gray!60}\mathbf{3}}}$ & $\overset{\phantom{0}\scaleto{[48,58]}{3pt}}{\phantom{0}{\color{gray!60}\mathbf{53}}}$ & $\overset{\phantom{0}\phantom{0}\scaleto{[2,5]}{3pt}}{\phantom{0}\phantom{0}{\color{gray!60}\mathbf{3}}}$ & $\overset{\phantom{0}\scaleto{[99,100]}{3pt}}{\phantom{0}{\color{ourblue}\mathbf{99}}}$ & $\overset{\phantom{0}\scaleto{[82,92]}{3pt}}{\phantom{0}{\color{gray!60}\mathbf{87}}}$ & $\overset{\phantom{0}\phantom{0}\scaleto{[0,0]}{3pt}}{\phantom{0}\phantom{0}{\color{gray!60}\mathbf{0}}}$ & $\overset{\phantom{0}\scaleto{[72,76]}{3pt}}{\phantom{0}{\color{ourblue}\mathbf{74}}}$ & $\overset{\phantom{0}\phantom{0}\scaleto{[1,2]}{3pt}}{\phantom{0}\phantom{0}{\color{gray!60}\mathbf{1}}}$ & $\overset{\phantom{0}\phantom{0}\scaleto{[2,3]}{3pt}}{\phantom{0}\phantom{0}{\color{gray!60}\mathbf{2}}}$ & \cellcolor{ourblue!10}$\overset{\phantom{0}\scaleto{[38,39]}{3pt}}{\phantom{0}{\color{gray!60}\mathbf{38}}}$ \\
\rowcolor{ourlightblue} & {\texttt{FEdit}} & $\overset{\phantom{0}\scaleto{[54,62]}{3pt}}{\phantom{0}{\color{gray!60}\mathbf{58}}}$ & $\overset{\phantom{0}\phantom{0}\scaleto{[1,3]}{3pt}}{\phantom{0}\phantom{0}{\color{gray!60}\mathbf{2}}}$ & $\overset{\phantom{0}\scaleto{[20,23]}{3pt}}{\phantom{0}{\color{gray!60}\mathbf{22}}}$ & $\overset{\phantom{0}\phantom{0}\scaleto{[2,3]}{3pt}}{\phantom{0}\phantom{0}{\color{gray!60}\mathbf{3}}}$ & $\overset{\phantom{0}\scaleto{[60,65]}{3pt}}{\phantom{0}{\color{gray!60}\mathbf{62}}}$ & $\overset{\phantom{0}\scaleto{[98,100]}{3pt}}{\phantom{0}{\color{gray!60}\mathbf{99}}}$ & $\overset{\phantom{0}\scaleto{[30,37]}{3pt}}{\phantom{0}{\color{ourblue}\mathbf{34}}}$ & $\overset{\phantom{0}\scaleto{[37,43]}{3pt}}{\phantom{0}{\color{gray!60}\mathbf{40}}}$ & $\overset{\phantom{0}\phantom{0}\scaleto{[2,3]}{3pt}}{\phantom{0}\phantom{0}{\color{gray!60}\mathbf{2}}}$ & $\overset{\phantom{0}\phantom{0}\scaleto{[3,7]}{3pt}}{\phantom{0}\phantom{0}{\color{gray!60}\mathbf{5}}}$ & \cellcolor{ourblue!10}$\overset{\phantom{0}\scaleto{[32,33]}{3pt}}{\phantom{0}{\color{gray!60}\mathbf{33}}}$ \\
\rowcolor{ourlightblue}\multirow{-4}{*}{\textsc{{Post-processing}}} & {\texttt{IFQL}} & $\overset{\phantom{0}\scaleto{[32,39]}{3pt}}{\phantom{0}{\color{gray!60}\mathbf{36}}}$ & $\overset{\phantom{0}\phantom{0}\scaleto{[0,2]}{3pt}}{\phantom{0}\phantom{0}{\color{gray!60}\mathbf{1}}}$ & $\overset{\phantom{0}\scaleto{[85,87]}{3pt}}{\phantom{0}{\color{ourblue}\mathbf{86}}}$ & $\overset{\phantom{0}\scaleto{[21,27]}{3pt}}{\phantom{0}{\color{ourblue}\mathbf{24}}}$ & $\overset{\phantom{0}\scaleto{[83,85]}{3pt}}{\phantom{0}{\color{gray!60}\mathbf{84}}}$ & $\overset{\scaleto{[100,100]}{3pt}}{{\color{ourblue}\mathbf{100}}}$ & $\overset{\phantom{0}\phantom{0}\scaleto{[0,0]}{3pt}}{\phantom{0}\phantom{0}{\color{gray!60}\mathbf{0}}}$ & $\overset{\phantom{0}\scaleto{[10,12]}{3pt}}{\phantom{0}{\color{gray!60}\mathbf{11}}}$ & $\overset{\phantom{0}\phantom{0}\scaleto{[0,0]}{3pt}}{\phantom{0}\phantom{0}{\color{gray!60}\mathbf{0}}}$ & $\overset{\phantom{0}\phantom{0}\scaleto{[1,3]}{3pt}}{\phantom{0}\phantom{0}{\color{gray!60}\mathbf{2}}}$ & \cellcolor{ourblue!10}$\overset{\phantom{0}\scaleto{[34,35]}{3pt}}{\phantom{0}{\color{gray!60}\mathbf{34}}}$ \\
\midrule
\rowcolor{ourlightmiddle} & {\texttt{QAM}} & $\overset{\phantom{0}\scaleto{[78,84]}{3pt}}{\phantom{0}{\color{gray!60}\mathbf{81}}}$ & $\overset{\phantom{0}\scaleto{[14,22]}{3pt}}{\phantom{0}{\color{gray!60}\mathbf{18}}}$ & $\overset{\phantom{0}\scaleto{[64,69]}{3pt}}{\phantom{0}{\color{gray!60}\mathbf{67}}}$ & $\overset{\phantom{0}\scaleto{[9,14]}{3pt}}{\phantom{0}{\color{gray!60}\mathbf{11}}}$ & $\overset{\phantom{0}\scaleto{[96,98]}{3pt}}{\phantom{0}{\color{gray!60}\mathbf{97}}}$ & $\overset{\scaleto{[99,100]}{3pt}}{{\color{gray!60}\mathbf{100}}}$ & $\overset{\phantom{0}\phantom{0}\scaleto{[0,0]}{3pt}}{\phantom{0}\phantom{0}{\color{gray!60}\mathbf{0}}}$ & $\overset{\phantom{0}\scaleto{[62,66]}{3pt}}{\phantom{0}{\color{gray!60}\mathbf{64}}}$ & $\overset{\phantom{0}\phantom{0}\scaleto{[3,4]}{3pt}}{\phantom{0}\phantom{0}{\color{gray!60}\mathbf{3}}}$ & $\overset{\phantom{0}\phantom{0}\scaleto{[2,4]}{3pt}}{\phantom{0}\phantom{0}{\color{gray!60}\mathbf{3}}}$ & \cellcolor{ourmiddle!10}$\overset{\phantom{0}\scaleto{[44,45]}{3pt}}{\phantom{0}{\color{ourmiddle}\mathbf{44}}}$ \\
\rowcolor{ourlightmiddle} & {\texttt{QAM-F}} & $\overset{\phantom{0}\scaleto{[81,84]}{3pt}}{\phantom{0}{\color{gray!60}\mathbf{83}}}$ & $\overset{\phantom{0}\scaleto{[8,16]}{3pt}}{\phantom{0}{\color{gray!60}\mathbf{12}}}$ & $\overset{\phantom{0}\scaleto{[62,68]}{3pt}}{\phantom{0}{\color{gray!60}\mathbf{65}}}$ & $\overset{\phantom{0}\scaleto{[9,14]}{3pt}}{\phantom{0}{\color{gray!60}\mathbf{12}}}$ & $\overset{\phantom{0}\scaleto{[92,97]}{3pt}}{\phantom{0}{\color{gray!60}\mathbf{95}}}$ & $\overset{\phantom{0}\scaleto{[99,100]}{3pt}}{\phantom{0}{\color{gray!60}\mathbf{99}}}$ & $\overset{\phantom{0}\phantom{0}\scaleto{[5,8]}{3pt}}{\phantom{0}\phantom{0}{\color{gray!60}\mathbf{6}}}$ & $\overset{\phantom{0}\scaleto{[63,67]}{3pt}}{\phantom{0}{\color{gray!60}\mathbf{65}}}$ & $\overset{\phantom{0}\phantom{0}\scaleto{[2,3]}{3pt}}{\phantom{0}\phantom{0}{\color{gray!60}\mathbf{3}}}$ & $\overset{\phantom{0}\scaleto{[11,17]}{3pt}}{\phantom{0}{\color{gray!60}\mathbf{14}}}$ & \cellcolor{ourmiddle!10}$\overset{\phantom{0}\scaleto{[45,46]}{3pt}}{\phantom{0}{\color{ourmiddle}\mathbf{45}}}$ \\
\rowcolor{ourlightmiddle}\multirow{-4}{*}{\textsc{{Adjoint Matching}}} & {\texttt{QAM-E}} & $\overset{\phantom{0}\scaleto{[80,86]}{3pt}}{\phantom{0}{\color{gray!60}\mathbf{83}}}$ & $\overset{\phantom{0}\phantom{0}\scaleto{[0,3]}{3pt}}{\phantom{0}\phantom{0}{\color{gray!60}\mathbf{1}}}$ & $\overset{\phantom{0}\scaleto{[54,63]}{3pt}}{\phantom{0}{\color{gray!60}\mathbf{59}}}$ & $\overset{\phantom{0}\phantom{0}\scaleto{[1,3]}{3pt}}{\phantom{0}\phantom{0}{\color{gray!60}\mathbf{2}}}$ & $\overset{\phantom{0}\scaleto{[96,98]}{3pt}}{\phantom{0}{\color{gray!60}\mathbf{97}}}$ & $\overset{\scaleto{[100,100]}{3pt}}{{\color{ourmiddle}\mathbf{100}}}$ & $\overset{\phantom{0}\scaleto{[35,43]}{3pt}}{\phantom{0}{\color{ourmiddle}\mathbf{39}}}$ & $\overset{\phantom{0}\scaleto{[63,68]}{3pt}}{\phantom{0}{\color{gray!60}\mathbf{65}}}$ & $\overset{\phantom{0}\phantom{0}\scaleto{[4,6]}{3pt}}{\phantom{0}\phantom{0}{\color{gray!60}\mathbf{5}}}$ & $\overset{\phantom{0}\phantom{0}\scaleto{[4,9]}{3pt}}{\phantom{0}\phantom{0}{\color{gray!60}\mathbf{6}}}$ & \cellcolor{ourmiddle!10}$\overset{\phantom{0}\scaleto{[45,47]}{3pt}}{\phantom{0}{\color{ourmiddle}\mathbf{46}}}$ \\
\bottomrule
\end{tabular}
}
\caption{ \textbf{Offline RL performance at 1M training steps (50 tasks, 12 seeds).} Our method (\texttt{QAM}) and two of its variants, \texttt{QAM-FQL} (\texttt{QAM-F}) and \texttt{QAM-EDIT} (\texttt{QAM-E}) outperform all prior baselines. We use \texttt{CGQL-M}, \texttt{CGQL-L}, \texttt{QAM-F}, \texttt{QAM-E} as the abbreviations for \texttt{CGQL-MSE},  \texttt{CGQL-Linex},  \texttt{QAM-FQL}, and \texttt{QAM-EDIT} respectively. For results on individual tasks, see \cref{tab:full-results-tab} in \cref{appendix:full-results}.
} 
\vspace{-2mm}
\label{tab:qam-full}
\end{table}

\vspace{-2mm}
\section{Results}
\vspace{-1mm}

In this section, we present our experimental results to answer the following three questions:

\textbf{(Q1) How effective is our method for offline RL?}

\cref{tab:qam-full} reports the offline RL performance across 10 different domains (50 tasks in total). QAM outperforms all prior methods with an aggregated score of $44$. In particular, among all the baselines, FAWAC also converges to the optimal behavior-constrained distribution (\cref{eq:opt-b}) but does not leverage the action gradient of the critic, resulting in a much worse aggregated score of $8$; BAM also leverages the SOC objective but optimizes it with backpropagation through the SDE chain (without using the `lean' adjoint), leading to a worse aggregated score of $35$. It is worth noting that FBRAC also leverages backpropagation through time (BPTT) similar to BAM, but performs much worse than BAM. The main distinction between BAM and FBRAC is that BAM leverages the SOC formulation which enjoys the guarantee that the policy at the optimum recovers the optimal behavior-constrained policy, whereas FBRAC has not been found to enjoy such a guarantee. This suggests that carefully designing the loss function where its optimum coincides with the optimal behavior-constrained policy can influence the algorithm's performance. Furthermore, BAM ends up performing similarly to FQL, which addresses the BPTT instability issue by leveraging a 1-step distilled policy. This suggests that BPTT might not be as disadvantageous as previously reported in \citet{park2025flow} in the best case, but can be sensitive to implementation details. In addition, we find QSM, one of the earliest methods proposed in the literature of diffusion/flow-matching and RL, can perform very well when augmented with a behavior cloning term.~\footnote{The original QSM implementation does not have a behavior cloning term. It only aligns the action gradient of the critic to the diffusion denoising process. This makes the vanilla QSM very hard to learn anything during the offline pre-training phase. To adapt it to our offline-to-online setting and make the comparison fair, we augment the original QSM loss with the standard DDPM loss as the behavior cloning loss. See more details in \cref{appendix:baselines}.} We also find that a Gaussian baseline, ReBRAC, while worse on manipulation tasks, often outperforms other flow/diffusion-based methods on locomotion tasks. This makes ReBRAC the fifth-best method in terms of aggregated score, falling short compared to QSM and all three QAM variants. Last but not least, combining QAM with FQL and FEdit (QAM-F and QAM-E respectively) can push the performance even further, achieving an aggregated score of $45$ and $46$ respectively.

\begin{figure}[t]
    \centering
    \includegraphics[width=\linewidth]{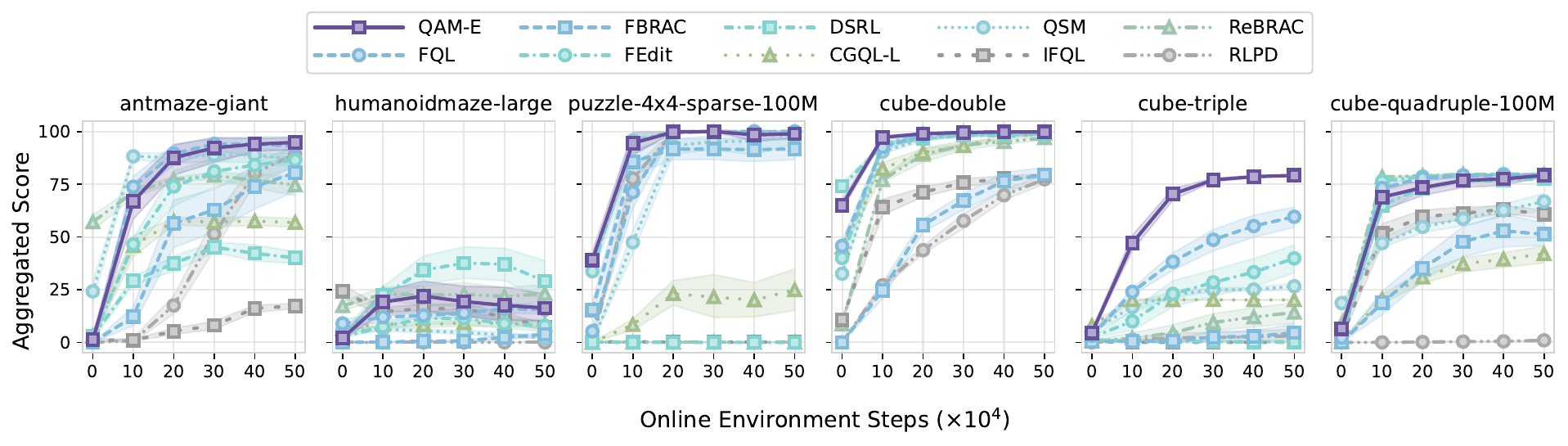}
    \caption{\textbf{QAM online fine-tunes more effectively than prior methods (50 tasks, 12 seeds).} We use the \texttt{QAM-EDIT} (\texttt{QAM-E}) variant for \texttt{QAM} and the \texttt{CGQL-Linex} variant for \texttt{CGQL} (\texttt{CGQL-L}) because they work well for online fine-tuning.  For full results, see \cref{fig:full-main} in the Appendix.}
    \label{fig:results-main}
\end{figure}

\textbf{(Q2) How effective is our method for offline-to-online fine-tuning?}

Next, we take the best performing variant of QAM, QAM-EDIT, and evaluate its ability to online fine-tune from its offline RL initialization. \cref{fig:results-main} shows the sample efficiency curve (with x-axis being the number of environment steps). Our method outperforms all prior methods on \texttt{cube-triple} and is the most robust method across the board. For example, compared to ours, QSM fine-tunes better on \texttt{ag} but struggles on all other tasks except on \texttt{c2} where it performs similar to our method. FQL fine-tunes slightly better on \texttt{ag} but is much slower on both \texttt{p44} and \texttt{c3}.

\begin{figure}[t]
    \centering
    \includegraphics[width=\linewidth]{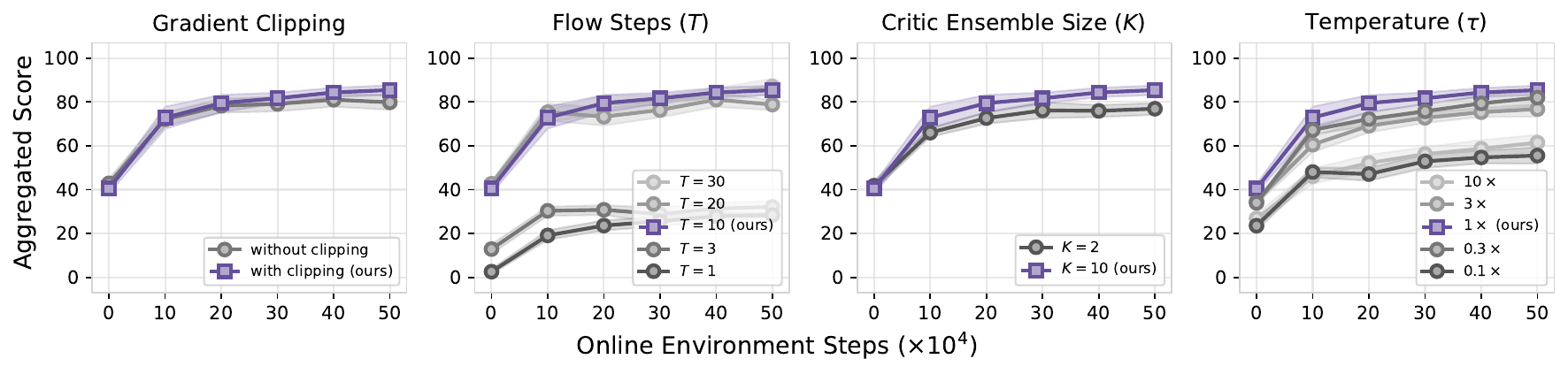}
    \caption{\textbf{Sensitivity analysis of \texttt{QAM-EDIT} on the representative task from each domain (12 seeds).} 
    For all locomotion tasks, we use \texttt{task1}. For all manipulation tasks, we use \texttt{task2}. We report the aggregated performance across these 10 representative tasks.
    \emph{Gradient Clipping}: whether to use gradient clipping in our optimizer; \emph{Flow Steps ($T$)}: this parameter indicates the number of numerical integration steps that we use for the flow model. \emph{Critic Ensemble Size ($K$)}: number of critic networks in the ensemble; \emph{Temperature ($\tau$)}: the parameter that modulates the influence of the prior. We rerun our method with $0.1\times, 0.3\times, 3\times$, and $10\times$ the best $\tau$ from tuning. Full results in \cref{fig:full-sensitivity}.}
    \label{fig:sensitivity}
\end{figure}

\textbf{(Q3) How sensitive is our method to hyperparameters?} 
We also conduct sensitivity analyses for various components of our best performing variant, QAM-EDIT, including gradient clipping, number of flow steps ($T$), critic ensemble size ($K$) and the temperature coefficient ($\tau$). As we show in \cref{fig:sensitivity}, all of these components contribute to our method's effectiveness. Among them, the temperature parameter ($\tau$) has the biggest impact on performance and needs to be tuned. For the other components, we find enabling gradient clipping and having a large critic ensemble size ($K=10$) also help. Lastly, we find that setting the number of flow steps to $T=10$ works well enough and increasing it further does not improve performance.

\vspace{-2mm}
\section{Discussion}
\vspace{-1mm}
We present Q-learning with Adjoint Matching (QAM), a new TD-based RL method that effectively leverages the critic's action gradient to extract an optimal prior-constrained policy while circumventing common limitations of prior approaches (\emph{e.g.}, approximations that do not guarantee to converge to the desired optimal solution, learning instability, or reduced expressivity from distillation). Our empirical results suggest that QAM is an effective policy extraction method in both the offline RL setting and the offline-to-online RL setting, performing on par or better than prior methods. There are still practical challenges associated with QAM. While QAM's effectiveness can be largely attributed to how well it is able to leverage the critic's action gradient, this can be a double-edged sword---for cases where the critic function is ill-conditioned, it could lead to optimization stability issues. Gradient clipping (as done in our method) can alleviate this issue, but a more principled method that combines both value and gradient information could further improve robustness and performance. Another possible extension is to apply QAM in real-world robotic settings with action chunking policies. Our initial success (especially in the manipulation domains where we also leverage action chunking policies) may suggest that our method might work more effectively in complex real-world scenarios.

\subsubsection*{Acknowledgments}
This research was partly supported by DARPA ANSR and ONR N00014-25-1-2060. This research used the Savio computational cluster resource provided by the Berkeley Research Computing program at UC Berkeley. We would like to thank William Chen, Kevin Frans, David Ma, Vivek Myers, Seohong Park, Michael Psenka, Ameesh Shah, Max Wilcoxson, Charles Xu, Bill Zheng, Zhiyuan (Paul) Zhou, and Lars Ankile for helpful discussions and feedback on early drafts of the paper and Catherine Glossop for spotting the bug in an early version of the QSM implementation. We would also like to thank anonymous ICLR reviewers for the insightful comments and discussions.

\bibliography{iclr2026_conference}
\bibliographystyle{iclr2026_conference}

\appendix
\newpage

\section{Additional Results}

\textbf{How robust is our method to data corruption?}
We study how robust \texttt{QAM-EDIT} is to data corruptions. In particular, we take the \texttt{noisy} (for manipulation tasks) and \texttt{stitch} (for navigation tasks) datasets from OGBench. The \texttt{noisy}-style datasets were collected by expert policies with larger, uncorrelated Gaussian noise, resulting in broader state coverage but a greater learning challenge. The \texttt{stitch}-style datasets were collected by the same expert policies as the original \texttt{navigation}-style datasets (what we use in all our other experiments), but the trajectories are much shorter segments (traveling up to 4 cells in the maze). These variants stress test the algorithm's ability to learn from noisy trajectories and `stitch' short trajectory segments. We evaluate our method and compare it against a representative set of best-performing baselines. We use the same hyperparameters from our main experiments for each domain and report the results in \cref{fig:quality}. For locomotion tasks, we observe almost no performance difference when switching from \texttt{navigate}-style to \texttt{stitch}-style datasets. For manipulation tasks, our method exhibits strong robustness against action noise. In particular, while all our baselines (except BAM) fail completely on the \texttt{cube-triple-noisy} environments, our method suffers only a minor performance drop.

\begin{figure}[H]
    \centering
    \includegraphics[width=\linewidth]{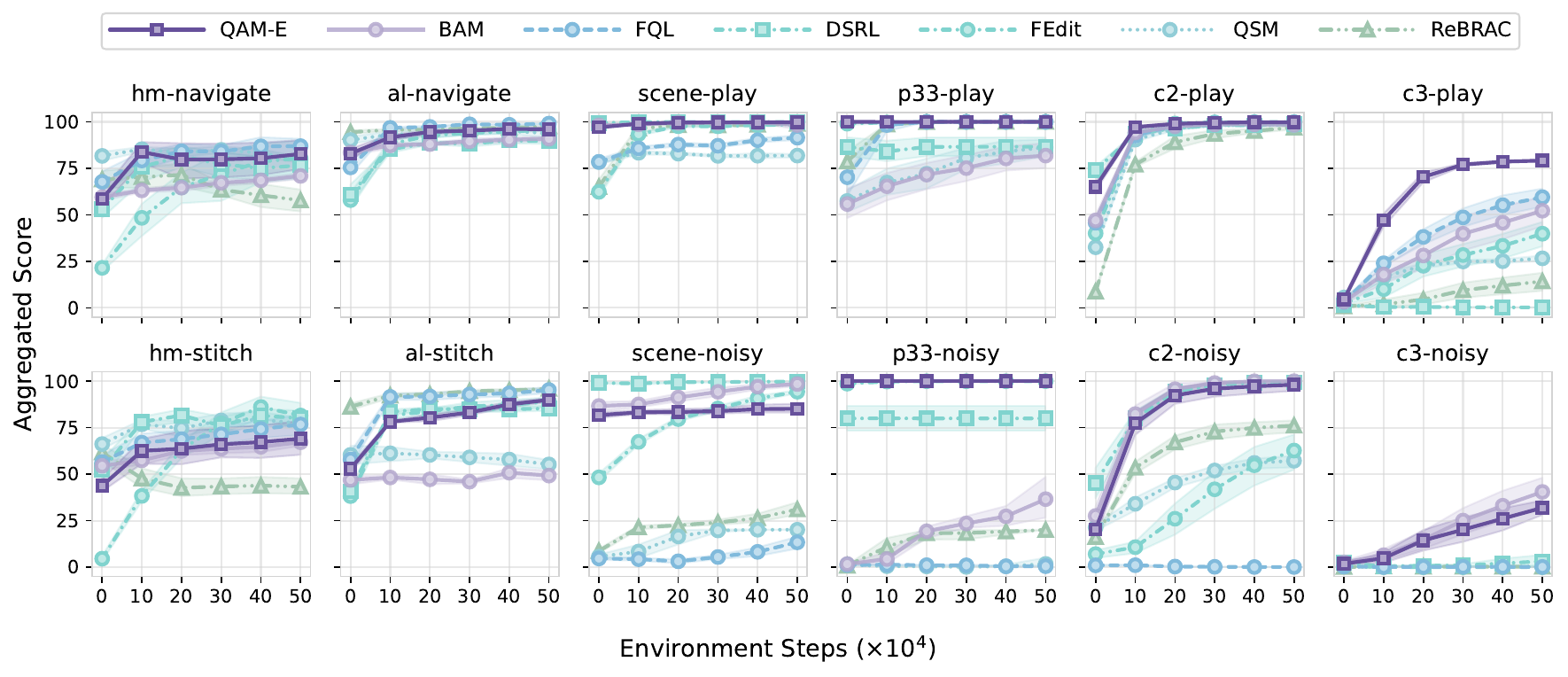}
    \caption{ \textbf{Data quality analysis (12 seeds).} \emph{Top:} performance on the original \texttt{\{navigate/play\}}-style datasets we use in our main experiments; \emph{Bottom:} performance on the \texttt{\{stitch/noisy\}}-style datasets. Each subplot reports the aggregated score over 5 tasks in each domain. For results on individual tasks and full training curves offline, see \cref{fig:full-quality} below.}
    \label{fig:quality}
\end{figure}

\section{Full Results}
\label{appendix:full-results}

We provide full training curves and result table for all methods and tasks individually. \cref{fig:full-main} shows both the offline phase (until 1M steps) and the online phase (after 1M steps). \cref{tab:full-results-tab} reports the performance at the end of the 1M-step offline training. The hyperparameters can be found in \cref{appendix:hyperparameters}. \cref{fig:full-sensitivity} shows the full training curves for our sensitivity analysis.

\begin{table}[t]
    \centering
    \makebox[\textwidth]{\scalebox{0.525}{
}}
    \caption{\textbf{Full offline results at 1M training steps (12 seeds)}.}
    \label{tab:full-results-tab}
\end{table}

\begin{figure}[t]
    \centering
    \noindent\makebox[\textwidth]{\includegraphics[width=1.45\linewidth]{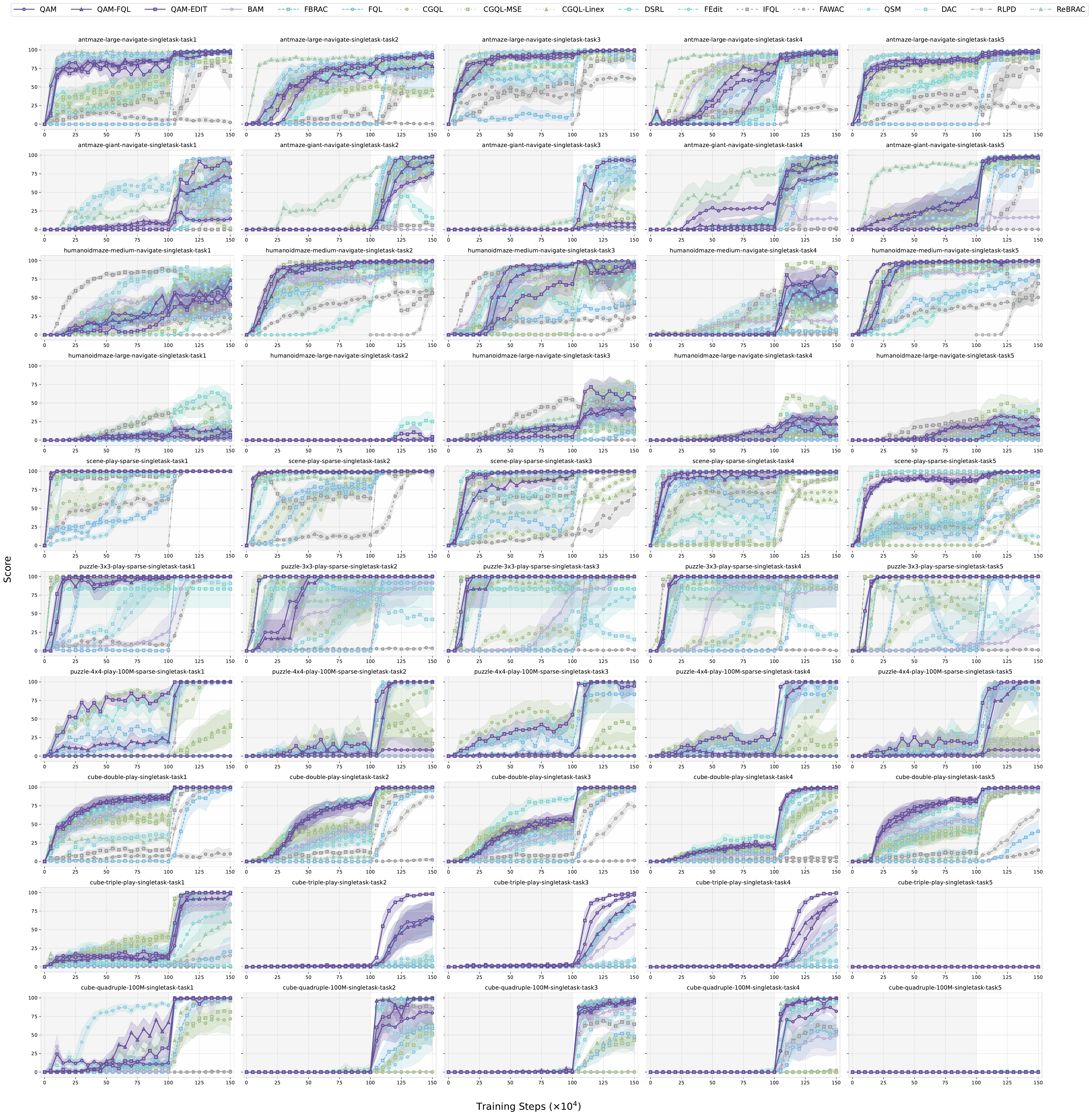}}
    \caption{\textbf{Full training curves for our main results (12 seeds).}}
    \label{fig:full-main}
\end{figure}

\begin{figure}[t]
    \centering
    \noindent\makebox[\textwidth]{\includegraphics[width=1.45\textwidth]{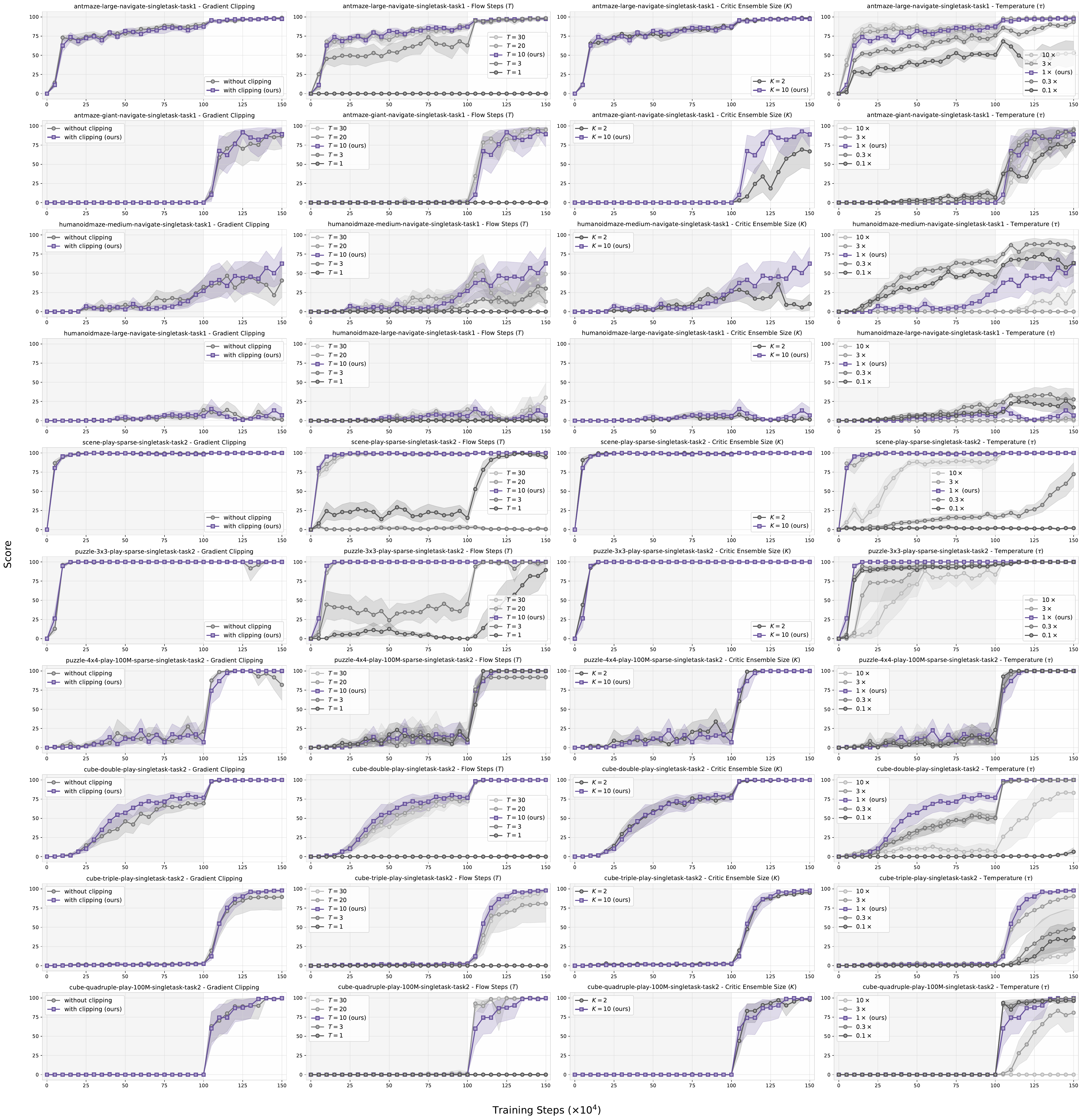}}
    \caption{\textbf{Full sensitivity results for \texttt{QAM-E} (12 seeds).}}
    \label{fig:full-sensitivity}
\end{figure}

\begin{figure}[t]
    \centering
    \noindent\makebox[\textwidth]{\includegraphics[width=1.45\linewidth]{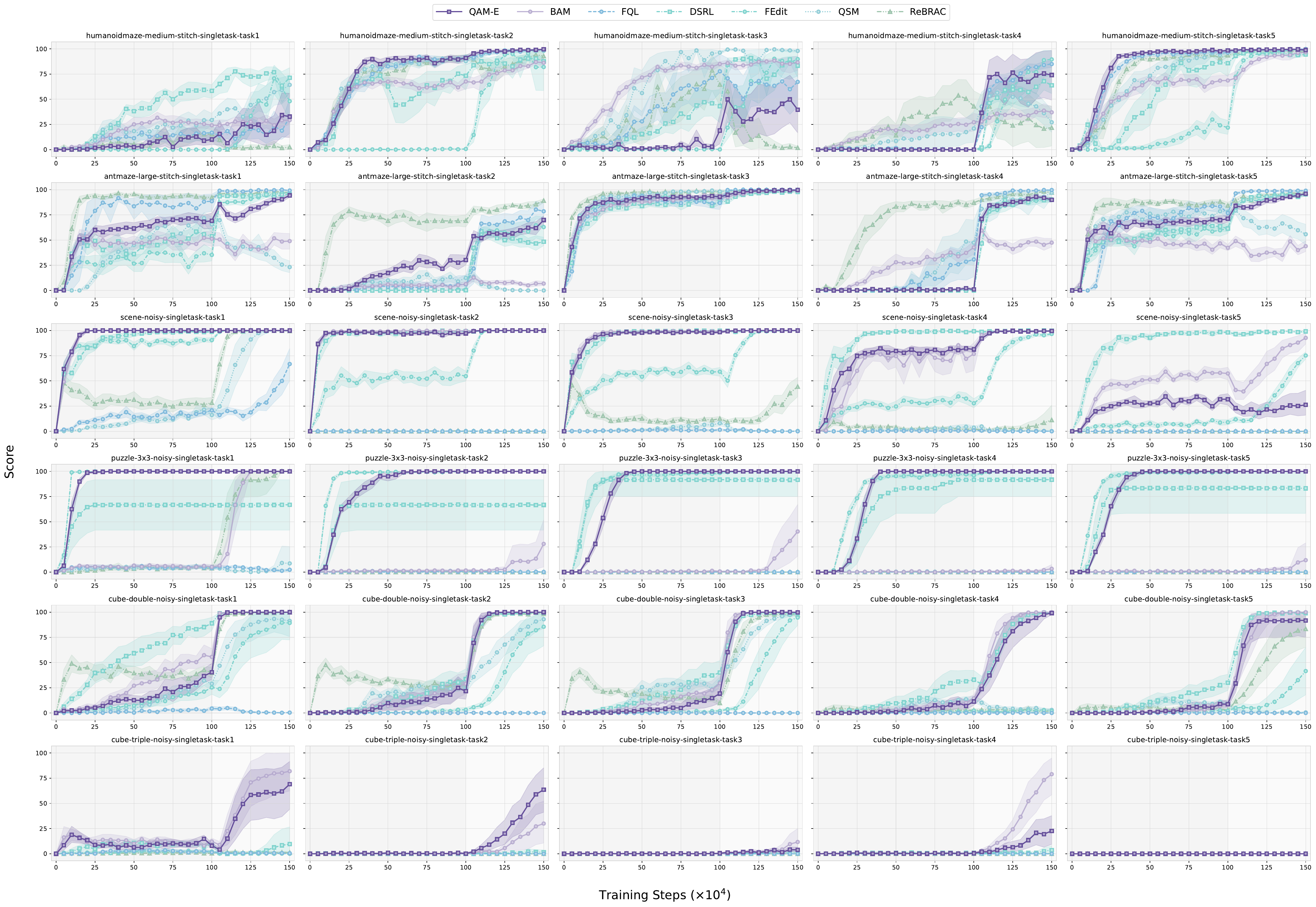}}
    \caption{\textbf{Full training curves for data quality analysis (12 seeds).}}
    \label{fig:full-quality}
\end{figure}

\section{Additional Discussions for Related Work}
\label{appendix:cagapp}

\textbf{Diffusion and flow-matching with guidance.} 
Diffusion and flow-matching models have been used for generating data with different modalities ranging from images~\citep{rombach2022high}, videos~\citep{ho2022video}, and text~\citep{lou2023discrete}. In most applications, the generative models trained on large-scale unlabeled data do not provide high-quality samples when conditioned on some context (\emph{e.g.}, language description), and a common practice is to augment the sampling process with classifier guidance/classifier-free guidance~\citep{dhariwal2021diffusion, ho2022classifier}, with the goal of aligning the sampling distribution better with the posterior distribution conditioned on the context. However, most of the guidance methods suffer from a bias problem that is tricky to tackle, stemming from the fact that simply adding or interpolating two diffusion/flow-matching sampling processes does not lead to the correct composite distribution in general~\citep{du2023reduce, bradley2024classifier}. One solution is to use Langevin dynamics sampling approaches~\citep{song2019generative} where only the score function for the noise-free distribution is required, but they have been known to under-perform diffusion/flow models due to the challenge of accurately estimating the score functions in low-density regions~\citep{song2020improved}. 
Since then, a line of work has proposed solutions to generate the correct composite distribution. \citet{du2023reduce}, \citet{phillips2024particle}, \citet{thornton2025composition}, \citet{singhal2025general} and \citet{skreta2025feynman} propose to use Sequential Monte Carlo (SMC) that uses resampling procedures to leverage additional test-time compute to correct such bias. Rather than correcting the distribution at test-time, \citet{domingo-enrich2025adjoint} and \citet{havens2025adjoint} take a different perspective by formulating it as a stochastic optimal control (SOC) objective that can be efficiently optimized as a fine-tuning process while providing a guarantee that the model converges to the correct distribution at the optimum. The flow/diffusion policy optimization problem in actor-critic RL methods shares a similarity to the aforementioned classifier/classifier-free guidance problem in the generative modeling literature where the critic function serves as the guidance to the generative policy model. This allows our method to build directly on top of the algorithm developed by \citet{domingo-enrich2025adjoint} while enjoying the guarantee that the policy converges to an optimal prior-regularized solution (\emph{i.e.}, $\pi \propto \pi_\beta \exp(Q(s, a))$).

\textbf{CEP~\citep{lu2023contrastive} and CFGRL~\citep{frans2025diffusion}.}
Both of them build off from the idea of classifier/classifier-free guidance, which combines the denoising step of a base diffusion/flow policy to the denoising step of a tilt distribution. 
\begin{align}
    \text{CEP (diffusion):}\quad & \log \pi(s,a_t, t) \leftarrow \alpha \log \pi_\beta(s, a_t, t) + (1-\alpha) Q(s, a_t, t) \\
    \text{CFGRL (flow):}\quad &f(s,a_t, t) \leftarrow \alpha f_\beta(s, a_t, t) + (1-\alpha) f_{o=1}(s, a_t, t) 
\end{align}
where $Q(s, a_t, t) \leftarrow \log \mathbb{E}_{a_t \mid a}\left[e^{Q(s, a)}\right]$ is the score of the Boltzmann distribution (i.e., $\propto e^{Q(s, a)}$) at denoising time $t$ and $f_o$ is the velocity field of the policy that is conditioned on an optimality variable, a binary indicator of whether the policy is `optimal' ($o=1$ means it is). CEP aims at approximating $\pi \propto (\pi_\beta)^{\alpha} (e^{Q(s, a)})^{(1-\alpha)}$ whereas CFGRL aims at approximating $\pi \propto (\pi_\beta)^{\alpha} (\pi_{o=1})^{(1-\alpha)}$.

However, as discussed in many prior work~\citep{du2023reduce, bradley2024classifier}, even when both the denoising steps are exact ($\log \pi(s, a_t, t)$ for diffusion and $f(\cdot, \cdot, t)$ for flow), the denoising process that uses a summation of them does not lead to the correct distribution:
\begin{align}
\nabla_{a_t} \log \pi(a_t \mid s) \neq  \nabla_{a_t} \log \pi_\beta(a_t \mid s) + \tau \nabla_{a_t} Q(s, a_t).
\end{align}

\textbf{DAC~\citep{fang2025diffusion}.} Diffusion actor critic uses the diffusion formulation where the goal is to find a policy that satisfies $\pi(\cdot \mid s) \propto \pi_\beta(\cdot \mid s) e^{Q(s, \cdot)}$. However, their training objective is derived based on the assumption that
\begin{align}
    \nabla_{a_t} \log p_t(a_t \mid s) \approx \nabla_{a_t} \log p_\beta(a_t \mid s) + \tau \nabla_{a_t} Q_t(s, a_t),
\end{align}
and additionally
\begin{align}
    \nabla_{a_t} Q_t(s, a_t) \approx \nabla_{a_t} Q(s, a_t).
\end{align}
While these assumptions provide a convenient approximation of the objective function, they do not provide guarantees on where the policy converges to at the optimum.

\section{Domain and experiment details}
\label{appendix:domains}
We consider {10} domains in our experiments. The dataset size, episode length, and the action dimension for each domain are available in \cref{tab:metadata}. {For each method and each task, we run 12 seeds. All plots and tables report the means with $95\%$ confidence intervals computed via bootstrapping using 5000 samples.} All our experiments are run on NVIDIA RTX-A5000 GPUs and our code is written in JAX~\citep{jax2018github}.  Each complete offline-to-online experiment run takes around {3 hours. To reproduce all our results in \cref{tab:qam-full} and \cref{fig:results-main}, we estimate that it would take around $\underbrace{5}_{\text{hours per single run}} \times \underbrace{17}_{\text{\# of methods}} \times \underbrace{50}_{\text{\# of tasks}} \times \underbrace{12}_{\text{\# of seeds}} = 51\ 000 \text{ GPU hours}$}.

\begin{table}[ht]
    \centering
    \begin{tabular}{@{}cccc@{}}
        \toprule
        \textbf{Tasks} & \textbf{Dataset Size}  & \textbf{Episode Length} & \textbf{Action Dimension ($A$)} \\
        \midrule
        \texttt{cube-double-*} &  $1$M & $500$ & $5$\\
        \texttt{cube-triple-*} &  $3$M & $1000$ & $5$\\
        \texttt{cube-quadruple-100M-*} &  $100$M & $1000$ & $5$\\
        \texttt{antmaze-large-*} &  $1$M & $1000$ & $8$\\
        \texttt{antmaze-giant-*} &  $1$M & $1000$ & $8$\\
        \texttt{humanoidmaze-medium-*} &  $2$M & $2000$ & $21$\\
        \texttt{humanoidmaze-large-*} &  $2$M & $2000$ & $21$\\
        \texttt{scene-sparse-*} &  $1$M & $750$ & $5$\\
        \texttt{puzzle-3x3-sparse-*} &  $1$M & $500$ & $5$\\
        \texttt{puzzle-4x4-100M-sparse-*} &  $100$M & $500$ & $5$\\
    \bottomrule
    \end{tabular}
    \vspace{1mm}
    \caption{ \textbf{Domain metadata.}}
    \label{tab:metadata}
\end{table}

\begin{figure}[t]
    \centering
            \includegraphics[width=0.275\linewidth]{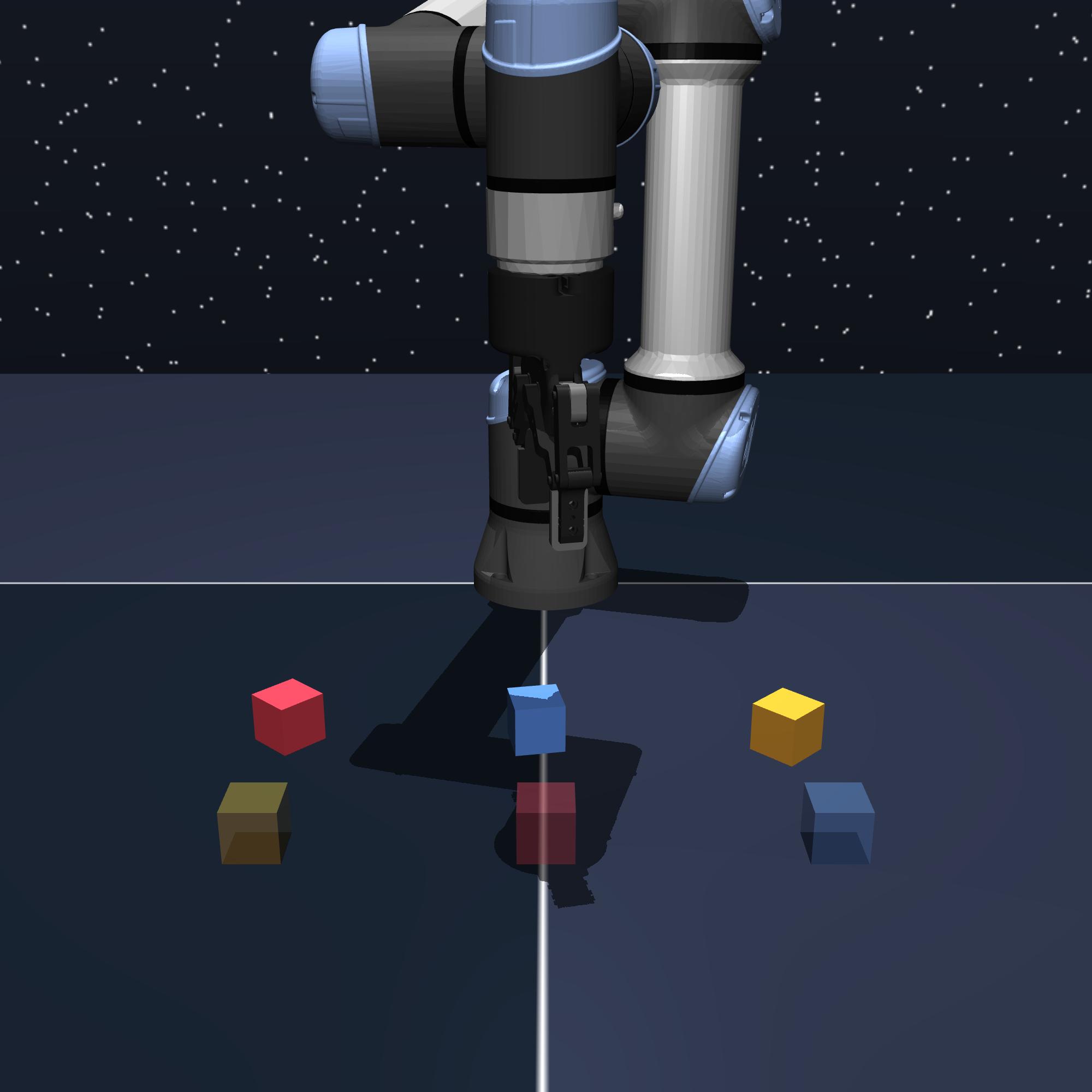}
            \includegraphics[width=0.275\linewidth]{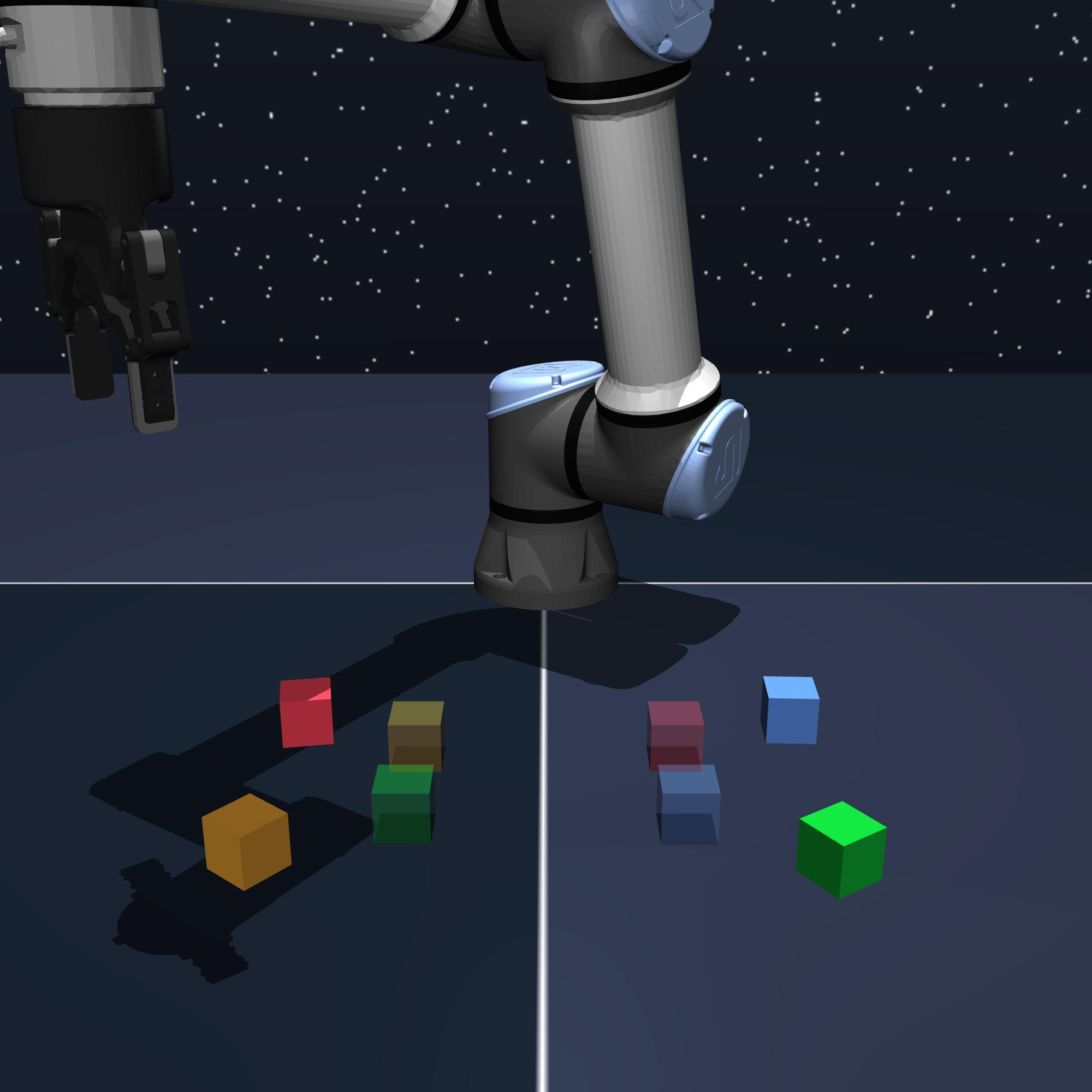}
            \includegraphics[width=0.40\linewidth]{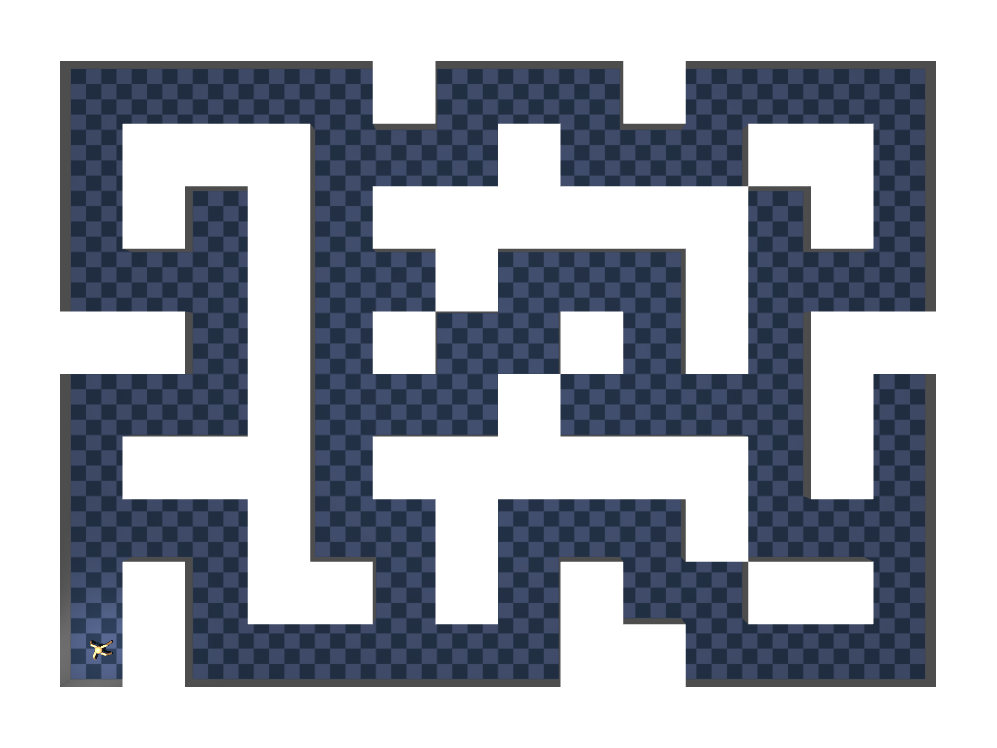}
    \caption{ \textbf{OGBench domains}. In this paper, we primarily focus on evaluating our method on some of the hardest domains on OGBench~\citep{ogbench_park2024}. \texttt{cube-\{triple, quadruple\}} (left) requires a robot arm to manipulate up to $3$/$4$ cubes from an initial arrangement to a goal arrangement. \texttt{antmaze-giant} (right) requires an ant robot agent to navigate from one location to another location. All of these domains are long-horizon by design and are difficult to solve from offline data alone. The offline datasets in these domains contain diverse multi-modal behaviors that can only be accurately captured by expressive generative models like flow/diffusion policies.}
    \label{fig:env_renders}
\end{figure}

\section{Baselines}
\label{appendix:baselines}
In this section, we describe in detail how each of our baselines is implemented. In all the loss functions below, unless specified otherwise, $s, a, r, s'$ are assumed to be sampled from $D$ and included as part of the expectation even when it is not explicitly written under the expectation (\emph{i.e.}, $\mathbb{E}_{z \sim \mathcal{N}} [\cdot] := \mathbb{E}_{(s, a, r, s') \sim D, z \sim \mathcal{N}} [\cdot]$). 

{\color{gray}\textbf{1. Backprop-based.}}

\textbf{FBRAC} is a baseline considered in FQL~\citep{park2025flow} as a flow counterpart of diffusion Q-learning (DQL)~\citep{dql_wang2023}, where the multi-step flow policy is directly optimized against the Q-function with backpropagation through time (BPTT). In addition to maximizing the Q-value, FBRAC also has a behavior cloning term where the flow policy is trained with the standard flow-matching objective on the dataset actions with a BC coefficient $\alpha$.

We implement this baseline by training a flow-matching policy with the following loss:
\begin{align}
    L_{\texttt{FBRAC}}(\theta) = \alpha L_{\mathrm{FM}}(\theta) + L_{\mathrm{BPTT}}(\theta),
\end{align}
where $L_{\mathrm{FM}}$ is the standard flow-matching loss that clones the data behavior (\emph{i.e.}, \cref{eq:fm}) and 
\begin{align}
    L_{\mathrm{BPTT}}(\theta) = -\mathbb{E}_{a^0 = z \sim \mathcal{N}(0, I_A)}\left[Q_\phi \left(s, \mathrm{Clip}\left[z + h\sum_{k=0}^{T-1} f_\theta(s, a^{k-1}, kh)\right]_{-1}^{1}\right)\right],
\end{align}
where $\mathrm{Clip}[\cdot]_{a}^{b}$ is an element-wise clipping function that makes sure the actions generated from the flow model $f_\theta$ are within the valid range $[-1, 1]$ and $\{a^i\}_i$ is the discrete approximation of the ODE trajectory using Euler's method with a step size of $h = 1/T$: 
\begin{align}
    a^{i+1} := a^{i} + h f_\theta(s, a^{i-1}, ih), \forall i \in \{0, 1, \cdots, T\}.
\end{align}
We use 
\begin{align}
\label{eq:ode-clip}
    \mathrm{ODE}(f_\theta(s, \cdot, \cdot), z) := \mathrm{Clip}\left[z + h\sum_{i=0}^{T-1} f_\theta(s, a^{i-1}, ih)\right]_{-1}^{1}
\end{align}
as the abbreviation for the rest of this section, and additionally use $\pi_{f_\theta}$ to denote the distribution of actions generated by $f_\theta$.

The critic loss is the standard TD backup:
\begin{align}
    L(\phi) = \mathbb{E}_{z \sim \mathcal{N}}\left[(Q_{\phi}(s, a) - r- Q_{\bar \phi}(s', \mathrm{ODE}(f_\theta(s', \cdot, \cdot), z))^2\right]
\end{align}
In practice, we also use $K=10$ critic functions and pessimistic target value backup as described in \cref{eq:pvbk} and the policy we use to interact with the environment is $\pi_{f_\theta}$.

\textbf{FQL}~\citep{park2025flow} distills a multi-step flow policy into a one-step policy to avoid BPTT. 

This baseline is implemented by training a behavior cloning flow-matching policy (\emph{i.e.}, $f_\theta: \mathcal{S} \times \mathbb{R}^A \times [0, 1] \to \mathbb{R}^A$), and a 1-step distilled noise-conditioned policy (\emph{i.e.}, $\Omega_\omega : \mathcal{S} \times \mathbb{R}^A \to \mathbb{R}^A$):
\begin{align}
    L_{\texttt{FQL}}(\theta, \omega) = L_{\mathrm{FM}}(\theta)  + L_{\mathrm{onestep}}(\omega)
\end{align}
where $L_{\mathrm{FM}}(\theta)$ is the standard flow-matching loss (i.e., \cref{eq:fm}) and
\begin{align}
    L_{\mathrm{onestep}}(\omega) = \mathbb{E}_{z \sim \mathcal{Z}}\left[\alpha \|\Omega_\omega(s, z) - \mathrm{ODE}(f_\theta(s, \cdot, \cdot), z)\|_2^2 - Q(s, \Omega_\omega(s, z)) \right]
\end{align}
where $\alpha$ is the BC coefficient that controls how close the 1-step distilled policy should be relative to the BC policy $\pi_{f_\theta}$.
Finally, the critic loss is the standard TD backup:
\begin{align}
    L(\phi) = \mathbb{E}_{z \sim \mathcal{N}}\left[(Q_{\phi}(s, a) - r- Q_{\bar \phi}(s', \Omega_\omega(s', z))^2\right]
\end{align}
In practice, we also use $K=10$ critic functions and pessimistic target value backup as described in \cref{eq:pvbk} and the policy we use to interact with the environment is $\pi_{\Omega_\omega}$ where the action is sampled by first drawing a Gaussian noise $z \sim \mathcal{N}$ and then obtaining the action by running through the one-step distilled model: $a = \Omega_\omega(s, z)$.

{\color{gray}\textbf{2. Directly using the action gradient of the critic (i.e., $\nabla_a Q(s, a)$) with approximations.}}

{
\textbf{QSM}~\citep{qsm_psenka2024} uses the action gradient of the critic to train a diffusion model such that it can reconstruct the policy that follows the Boltzmann distribution of the Q-function:
\begin{align}
    \pi(\cdot \mid s) \propto \exp(\tau Q_\phi(s, \cdot)).
\end{align}
To approximate the score of the intermediate actions, $\nabla_{a^i} \log \pi^i(a^i \mid s)$, QSM directly uses the critic evaluated at $a^i$:
\begin{align}
    \nabla_{a^i} \log \pi^i(a^i \mid s) \approx \tau\nabla_{a^i} Q_\phi(s, a^i)
\end{align}
where $a^i = \sqrt{\alpha^i} a + \sqrt{1 - \alpha^i} \varepsilon$ with $\varepsilon \sim \mathcal{N}$ and $\{\alpha^0, \cdots, \alpha^{T-1}\}$ being any diffusion schedule with the number of diffusion steps of $T$. For all diffusion methods in this paper, we follow \citet{hansen2023idql} to use the variance preserving diffusion schedule:
\begin{align}
    \beta^i &= 1 - \exp\left(-\frac{b_{\mathrm{min}}}{T} - \frac{(b_{\mathrm{max}} - b_{\mathrm{min}})(2i + 1)}{2T^2}\right),\\
    \alpha^i &= \prod_{j=0}^{i} (1-\beta^j),
\end{align}
for all $i \in \{0, 1, \cdots, T-1\}$ with $b_{\mathrm{min}} = 0.1, b_{\mathrm{max}} = 10$ and $T$ being the number of diffusion steps.

To train the diffusion model, QSM uses the following loss function:
\begin{align}
    L_{\texttt{QSM}}(\theta) = \mathbb{E}_{\varepsilon \sim \mathcal{N}, i \sim \mathcal{U}\{0, \cdots, T-1\}}\left[\|-\tau \nabla_aQ_{\phi}(s, a^i)-f_\theta(s, a^i, i)\|_2^2 \right],
\end{align}
where again $a^i = \sqrt{\alpha^i}a + \sqrt{1-\alpha^i}\varepsilon$. In practice, we find that using $\nabla_a Q_{\bar \phi}(s, a^i)$ (the action gradient of the critic network) helps learning stability, so we use that instead. 

To adopt QSM into the offline-to-online RL setting, we additionally augment the loss function with the standard diffusion loss~\citep{ho2020denoising} as the behavior regularization:
\begin{align}
    L_{\mathrm{DDPM}}(\theta) = \mathbb{E}_{\varepsilon \sim \mathcal{N}, i \sim \mathcal{U}\{0, \cdots, T-1\}}\left[\|\varepsilon-f_\theta(s, a^i, i)\|_2^2 \right].
\end{align}
The overall actor loss is thus
\begin{align}
    L(\theta) = L_{\texttt{QSM}}(\theta) + \eta L_{\mathrm{DDPM}}(\theta).
\end{align}
QSM then uses the standard diffusion denoising procedure (with clipping to make sure generated actions are within $[-1, 1]^{|A|}$):
\begin{align}
    a^{i-1} \leftarrow \mathrm{Clip}\left[\frac{1}{\sqrt{1-\beta^i}}\left(a^i - \frac{\beta^i}{\sqrt{1-\alpha^i}}f_\theta(s, a^i, i)\right) + \sqrt{\beta^i} \varepsilon^i\right]_{-1}^1, \quad  \varepsilon^i \sim \mathcal{N},
\end{align}
where $a^T \sim \mathcal{N}$. We denote $\mathrm{Diff}(f_\theta(s, \cdot, \cdot))$ as the resulting $a^0$ after going through the diffusion procedure above and $\pi_{f_\theta}: s \mapsto \mathrm{Diff}(f_\theta(s, \cdot, \cdot))$ as the policy that generates actions using $f_\theta$. 

The critic loss can now be defined as follows:
\begin{align}
    L(\phi) = \mathbb{E}_{z \sim \mathcal{N}}\left[(Q_{\phi}(s, a) - r- Q_{\bar \phi}(s', a' \sim \pi_{f_\theta}(\cdot \mid s')))^2\right]
\end{align}
In practice, we also use $K=10$ critic functions and pessimistic target value backup as described in \cref{eq:pvbk} and the policy we use to interact with the environment is $\pi_{f_\theta}$.

\textbf{DAC}~\citep{fang2025diffusion} is another method that uses a similar approximation with a behavior prior (\emph{i.e.}, $\nabla_{a^i} \log \pi^i(a^i \mid s) \approx \nabla_{a^i} \log \pi^i_\beta(a^i \mid s) + \tau \nabla_{a^i} Q_\phi(s, a^i)$). To train the diffusion model, DAC matches $f_\theta$ to a linear combination of $\varepsilon$ (behavior cloning) and $\nabla_{a^i} Q(s, a^i)$ (Q-maximization).
\begin{equation}    
\begin{aligned}
    L_{\texttt{DAC}}(\theta) &= \mathbb{E}_{\varepsilon \sim \mathcal{N}, i \sim \mathcal{U}\{0, \cdots, T-1\}}\left[\|\varepsilon -\tau \nabla_{a^i} Q_{\phi}(s, a^i)-f_\theta(s, a^i, i)\|_2^2 \right] \\
    &= \mathbb{E}_{\varepsilon \sim \mathcal{N}, i \sim \mathcal{U}\{0, \cdots, T-1\}}\left[\|\varepsilon -f_\theta(s, a^i, i)\|_2^2 + \eta^{-1} (f_\theta(s, a^i, i) \cdot \nabla_{a^i} Q_\phi(s, a^i)) \right] + C,
\end{aligned}
\end{equation}
with $\eta = 2/\tau$. In practice, it is implemented as a combination of 
\begin{align}
    L(\theta) = \mathbb{E}_{\varepsilon \sim \mathcal{N}, i \sim \mathcal{U}\{0, \cdots, T-1\}}\left[\|(f_\theta(s, a^i, i) \cdot \nabla_{a^i} Q_\phi(s, a^i)) \right] + \eta L_{\mathrm{DDPM}}(\theta)
\end{align}

DAC uses the following procedure:
\begin{align}
    \hat{a}^{i-1} &\leftarrow \mathrm{Clip}\left[\frac{1}{\sqrt{\alpha^i}}\left(a^i - \sqrt{1-\alpha^i}f_\theta(s, a^i, i)\right)\right]_{-1}^{1} \label{eq:dac-p1}\\
    a^{i-1} &\leftarrow \frac{\beta^i\sqrt{\alpha^{i-1}}}{1-\alpha^i}\hat{a}^{i-1} + \sqrt{1-\beta^i}(1-\alpha^{i-1}) a^i
     + \sqrt{\beta^i} \varepsilon^i, \quad  \varepsilon^i \sim \mathcal{N} \label{eq:dac-p2}
\end{align}
where $a^T \sim \mathcal{N}$ and $\hat{a}^{i-1}$ can be interpreted as an approximation of the denoised action (\emph{e.g.}, in the DDIM sampler~\citep{song2020denoising}). This approximated denoised action is first clipped to be within $[-1, 1]^{|A|}$ and then used to reconstruct $a^{i-1}$ using the closed-form Gaussian conditional probability distribution of $p(a^{i-1} \mid a^0=\hat{a}^{i-1}, a^{T}=a)$ (\emph{e.g.}, Eq. (7) in \citet{ho2020denoising}). In practice, DAC also uses $\sqrt{\beta^i}$ as the standard deviation for this conditional distribution instead of the correct one $\tilde \beta^i = \frac{\sqrt{1- \alpha^{i-1}}}{\sqrt{1-\alpha^{i}}}\beta^i$ as an approximation. This is why in the expression above the multiplier before $\varepsilon^i$ is $\sqrt {\beta^i}$.

The critic loss can now be defined as follows:
\begin{align}
    L(\phi) = \mathbb{E}_{z \sim \mathcal{N}}\left[(Q_{\phi}(s, a) - r- Q_{\bar \phi}(s', a' \sim \pi_{f_\theta}(\cdot \mid s')))^2\right],
\end{align}
where $\pi_{f_\theta}$ is the policy that generates actions using the procedure above in \cref{eq:dac-p1} and \cref{eq:dac-p2}. In practice, we also use $K=10$ critic functions and pessimistic target value backup as described in \cref{eq:pvbk} and the policy we use to interact with the environment is $\pi_{f_\theta}$.
}

\textbf{CGQL} is a novel baseline built on top of the idea of classifier guidance~\citep{dhariwal2021diffusion}. In particular, we combine the velocity field of a behavior cloning flow policy and the gradient field of the Q-function to form a new velocity field that approximates the velocity field that generates the optimal behavior-constrained action distribution. 

More specifically, we implement this baseline  by interpreting $Q_\phi(s, \cdot)$ as the score of the optimal entropy-regularized distribution $\log \pi^\star(\cdot \mid s)$ (where $\pi^\star(\cdot \mid s) \propto e^{\tau Q_\phi(s, \cdot)}$). The corresponding velocity field that generates this distribution of actions can be obtained through a simple conversion (e.g., following Equation 4.79 from \citet{lipman2024flow}):
\begin{align}
    v_\phi(s, a, t) := \frac{(1-t) \tau \nabla_a Q(s, a, t) + a}{t},
\end{align}
where
\begin{align}
\label{eq:exact-guidance}
    Q(s, a_t, t) := \frac{1}{\tau}\log \mathbb{E}_{a_1 \sim P(a_1 \mid a_t)}\left[e^{\tau Q_\phi(s, a_1)}\right],
\end{align}
is an approximation of the score of the distribution over the noisy intermediate actions. In the classifier guidance literature, the score of the noisy examples is approximated by the score of the noise-free examples at the noisy examples~\citep{dhariwal2021diffusion}. In our setting this translates to
\begin{align}
    \hat{f}_\phi(s, a_t, t) := \frac{(1-t) \tau \nabla_a  Q_\phi(s, a_t) + a_t}{t}.
\end{align}
Empirically, both versions ($f_\phi$ and $\hat{f}_\phi$) perform similarly and we opt for a simpler design $\hat{f}$ as it does not require learning or approximating $Q(s, a, t)$ for all $u\in [0, 1]$. 
Finally, we add the velocity field defined by $Q_\phi$ directly to the behavior cloning velocity field to form our policy:
\begin{align}
    f = f_\beta + \vartheta\hat{f}_\phi,
\end{align}
where $f_\beta$ is trained with the standard flow-matching loss (i.e., $L(\beta) = L_{\mathrm{FM}}(\beta)$) and $\vartheta$ is a coefficient that modulates the influence of the guidance that we find to be helpful (\emph{e.g.}, $\vartheta < 1$ often works better than $\vartheta = 1$).
The critic loss uses $\pi_{v}$ to backup the target $Q$-value:
\begin{align}
    L(\phi) = \mathbb{E}_{z \sim \mathcal{N}}\left[(Q_{\phi}(s, a) - r- Q_{\bar \phi}(s', \mathrm{ODE}(f(s', \cdot, \cdot), z))^2\right].
\end{align}
In practice, we also use $K=10$ critic functions and pessimistic target value backup as described in \cref{eq:pvbk} 
and the policy we use to interact with the environment is $\pi_{v}$ (generated from the summation of $f_\beta$ and $\hat{f}_\phi$).

{

\textbf{CGQL-MSE/Linex.} Alternatively, we can approximate $Q_t$ in \cref{eq:exact-guidance} more closely with a training objective. In particular, we explore the following two regression objectives:
\begin{align}
    \textbf{MSE:}\quad L_{\mathrm{MSE}}(\zeta) &= \mathbb{E}_{t \sim \mathcal{U}[0, 1], z \sim \mathcal{N} }\left[\left(\hat{Q}_\zeta(s, a_t, t) - Q_\phi(s, a)\right)^2\right] \label{eq:cgql-mse} 
    \\
    \textbf{Linex:}\quad L_{\mathrm{Linex}}(\zeta) &= \mathbb{E}_{t \sim \mathcal{U}[0, 1], z \sim \mathcal{N}}\left[\exp(\tau(Q_\phi(s, a) - \hat{Q}_\xi(s, a_t, t))) + \tau\hat{Q}_\xi(s, a_t, t)\right]  \label{eq:cgql-awr}
\end{align}
where $a_t: =(1-t)z + ta$. The optimal solution of $\hat{Q}_\zeta$ in \cref{eq:cgql-mse} is not exactly the same as the desired $Q_\phi$ in \cref{eq:exact-guidance} but constitutes a lower-bound due to Jensen's inequality:
\begin{align}
    Q_\mathrm{MSE}^{\star}(s, a_t, t) = \mathbb{E}_{z, t}\left[Q_\phi(s, a_t)\right] \leq \frac{1}{\tau}\log \mathbb{E}_{a_1 \sim P(a_1 \mid a_t)}\left[e^{\tau Q_\phi(s, a_t)}\right].
\end{align}

The second objective resembles the classic Linex objective~\citep{parsian2002estimation} where the optimal solution of $\hat{Q}_\zeta$ in \cref{eq:cgql-awr} is the same as the desired $Q$ in \cref{eq:exact-guidance}:
\begin{align}
    Q_\mathrm{Linex}^{\star}(s, a_t, t) = \frac{1}{\tau}\log \mathbb{E}_{a_1 \sim P(a_1 \mid a_t)}\left[e^{\tau Q_\phi(s, a_1)}\right].
\end{align}
A discussion on this can be found in \citet{myers2025offline}. 
For completeness, we also show this below. Without loss of generality, we just need to show that $\exp(y) = \mathbb{E}[\exp(x)] = \int p(x)\exp(x) \dd x$ is the unique minimum for the following loss function:
\begin{align}
    L(y) = \int p(x) \left[\exp(x-y) + y\right] \dd x
\end{align}
Taking the first and second derivatives with respect to $y$ gives
\begin{align}
    \frac{\dd L}{\dd y} &= -\exp(-y) \mathbb{E}[\exp(x)] + 1 \label{eq:cgql-exp-fc} \\
    \frac{\dd^2 L}{\dd y^2} &= \exp(-y)  \mathbb{E}[\exp(x)] \label{eq:cgql-exp-sc}
\end{align}
Setting \cref{eq:cgql-exp-fc} gives $y = \log \mathbb{E}[\exp (x)]$ at which the second derivative is 1 (\emph{i.e.}, $\frac{\dd^2 L}{\dd y^2} = 1$).
Furthermore, to prevent the exponential blow up of $\exp(Q_\phi(s, a) - \hat{Q}_\xi(s, a_t, t))$, we follow \citet{myers2025offline} to use a Huber-style loss that locally behaves like a Linex loss but with a linear penalty when the exponential term is too large. In particular, we use the following loss:
\begin{align}
    L_{\mathrm{Linex}^+}(\zeta) = \begin{cases}
    \mathbb{E}\left[\exp(\Delta) + \tau\hat{Q}_\xi(s, a_t, t)\right], &\quad \Delta < 5 \\
    \mathbb{E}\left[\Delta\right], &\quad \Delta \geq 5
    \end{cases}
\end{align}
where $\Delta = \tau(Q_\phi(s, a) - \hat{Q}_{\xi}(s, a_t, t))$.

In practice, however, both the MSE and Linex objectives ($L_{\mathrm{Linex}^+}, L_{\mathrm{MSE}^+}$) can still be unstable and exhibit high variances. Instead of learning $\hat{Q}_\zeta$ and $Q_\phi$ separately, we find it is often better to directly learn both of them in a single network $Q_\phi(s, a, t): \mathcal{S}\times \mathcal{A} \times [0, 1] \to \mathbb{R}$ as follows:
\begin{equation}
\begin{aligned}
    L(\phi) = &\mathbb{E}_{z \sim \mathcal{N}}\left[(Q_{\phi}(s, a, 1) - r- Q_{\bar \phi}(s', \mathrm{ODE}(f(s', \cdot, \cdot), z))^2\right] + \\
        & \quad\quad  \varrho\mathbb{E}_{t \in \mathcal{U}[0, 1], z \sim \mathcal{N}}\left[(Q_\phi(s, (1-t)z + ta, t) - Q_{\bar \phi}(s, a, 1) )^2\right],
\end{aligned}
\end{equation}
where $\varrho$ is the coefficient that balances the TD backup for $Q_\phi(s, a, 1)$ and the noisy target regression for $Q_\phi(s, a, t<1)$.
With $Q_\phi(s, \cdot, t)$, we can define our velocity field (which is also used as the TD backup above) as 
\begin{align}
    f := f_\beta + \vartheta\hat{f}_\phi, \quad \text{where }
    \hat{f}_\phi(s, a_t, t) := \frac{\tau \nabla_a  Q_\phi(s, a_t, t) + a_t}{t},
\end{align}
and $\vartheta$ again modulates the guidance strength.
}

{\color{gray}\textbf{3. Directly using the critic value (i.e., $Q(s, a)$).}}

\textbf{FAWAC} is a baseline considered in FQL~\citep{park2025flow} where it uses AWR to train the flow policy similar to QIPO~\citep{zhang2025energyweighted}.

We implement it by training a flow-matching policy with the weighted flow-matching loss:
\begin{align}
    L_{\texttt{FAWAC}}(\theta) &= \tilde{w}(s, a) L_{\mathrm{FM}}(\theta) \\
    &= \tilde{w}(s, a) \mathbb{E}_{t \sim \mathcal{U}[0, 1], z \sim \mathcal{N}} \left[\|f_\theta(s, (1-t)z + t a, t) - z + a\|_2^2\right]
\end{align}
where $\tilde{w}(s, a) = \min\left(e^{\tau(Q_\phi(s, a) - V_\xi(s))}, 100.0\right)$. The inverse temperature parameter $\tau$ controls how sharp the prior regularized optimal policy distribution is.

The critic function $Q_\phi(s, a)$ is trained with the standard TD backup and the value function $V_\xi(s)$ regresses to the same target:
\begin{align}
    L(\phi) &= \mathbb{E}_{z \sim \mathcal{N}}\left[(Q_{\phi}(s, a) - r- Q_{\bar \phi}(s', \mathrm{ODE}(f(s', \cdot, \cdot), z))^2\right] \\
    L(\xi) &= \mathbb{E}_{z \sim \mathcal{N}}\left[(V_{\xi}(s) - r- Q_{\bar \phi}(s', \mathrm{ODE}(f(s', \cdot, \cdot), z))^2\right] 
\end{align}
The second line can also be alternatively implemented by regressing to the critic function $Q(s, a)$ directly. We implement it in this particular way because we can re-use the $Q$-target computed.

In practice, we also use $K=10$ critic functions and pessimistic target value backup as described in \cref{eq:pvbk} and the policy we use to interact with the environment is $\pi_{f_\theta}$.

{\color{gray}\textbf{4. Post-processing-based.}}

\textbf{FEdit} is a baseline that uses the policy edit from a recent offline-to-online RL method conceptually similar to EXPO~\citep{dong2025expo}. We implement a Gaussian edit policy on top of a BC flow policy rather than a diffusion policy used in EXPO. EXPO also uses the standard sample-and-rank trick where it samples multiple actions and ranks them based on the value. To keep computational cost down and comparisons fair to other methods, we only use a single edited action for both value backup and evaluation.

We implement this baseline by training a flow-matching policy (\emph{i.e.}, $f_\theta: \mathcal{S} \times \mathbb{R}^A \times [0, 1] \to \mathbb{R}^A$), and a 1-step Gaussian edit policy (\emph{i.e.}, $\pi_\omega : \mathcal{S} \times \mathcal{A} \to \Delta_{\mathcal{A}}$) implemented with an entropy regularized SAC policy~\citep{haarnoja2018soft}. The loss function can be described as follows:
\begin{align}
    L_{\texttt{FEdit}}(\theta, \omega) = L_{\mathrm{FM}}(\theta) + L_{\mathrm{Gaussian}}(\omega), \quad \mathrm{s.t.} \quad \mathbb{E}_{s \sim D}\left[\mathbb{H}(\pi_\omega(\cdot \mid s))\right] \geq H_{\mathrm{target}}
\end{align}
where $L_{\mathrm{FM}}$ is the standard flow-matching loss that clones the data behavior (\emph{i.e.}, \cref{eq:fm}), $H_{\mathrm{target}}$ is the target entropy that the Gaussian policy is constrained to be above, and 
\begin{align}
    L_{\mathrm{Gaussian}}(\omega) = \mathbb{E}_{\Delta a \sim \pi_\omega(\cdot \mid s, \tilde a), z \sim \mathcal{N}}\left[ - Q_\phi(s, \mathrm{Clip}\left[\sigma_a \cdot \Delta a + \tilde a\right]_{-1}^{1})\right]
\end{align}
where $\tilde a = \mathrm{ODE}(f_\theta(s, \cdot, \cdot), z)$ and $\mathrm{Clip}[\cdot]_{a}^{b}$ is an element-wise clipping function that makes sure the actions are within the valid range $[-1, 1]$.

Intuitively, the Gaussian SAC policy edits the behavior flow policy by modifying its output action where $\sigma_a$ is the hyperparameter that controls how much the original behavior actions can be edited.

The critic loss is the standard TD backup:
\begin{align}
    L(\phi) = \mathbb{E}_{z \sim \mathcal{N}, \Delta a' \sim \pi_\omega (\cdot \mid s', \tilde{a}')}\left[(Q_{\phi}(s, a) - r- Q_{\bar \phi}(s', \mathrm{Clip}\left[\tilde a' + \sigma_a \cdot \Delta a'\right]_{-1}^{1})^2\right]
\end{align}
where again $\tilde{a}' = \mathrm{ODE}(f_\theta(s', \cdot, \cdot), z)$.
In practice, we also use $K=10$ critic functions and pessimistic target value backup as described in \cref{eq:pvbk} and the policy we use to interact with the environment is a combination of the BC policy $\pi_{f_\theta}$ and the Gaussian edit policy. We first sample $z \sim \mathcal{N}$ and then run it through the BC flow policy to obtain an initial action $\tilde a \leftarrow \mathrm{ODE}(f_\theta(s, \cdot, \cdot), z)$ and then both the initial action and the state are fed into the edit policy to generate the final action $a \leftarrow \tilde a + \sigma_a \cdot  \Delta a$ where $\Delta a \sim \pi_\omega(\cdot \mid s, \tilde a)$.

\textbf{DSRL}~\citep{wagenmaker2025steering} is a recently proposed method that performs RL directly in the noise-space of a pre-trained expressive BC policy (flow or diffusion). We use the flow-matching version of DSRL as our method is also based on flow-matching policies. The original DSRL implementation does not fine-tune the BC policy during online learning while all our baselines do fine-tune the BC policy online. To make the comparison fair, we modify the DSRL implementation such that it also fine-tunes the BC policy. One additional implementation trick that allows this modification to work well is the use of a target policy network for the noise-space policy. In general, we find that fine-tuning the BC policy yields better online performance, so we adopt this new design of DSRL in our experiments.

More specifically, we train a flow-matching policy (\emph{i.e.}, $f_\theta: \mathcal{S} \times \mathbb{R}^A \times [0, 1] \to \mathbb{R}^A$), and a 1-step Gaussian edit policy (\emph{i.e.}, $\pi_\omega : \mathcal{S} \times \mathcal{A} \to \Delta_{\mathcal{A}}$) implemented with an entropy regularized SAC policy~\citep{haarnoja2018soft}. The loss function can be described as follows:
\begin{align}
    L_{\texttt{DSRL}}(\theta, \omega) = L_{\mathrm{FM}}(\theta) + L_{\mathrm{LatentGaussian}}(\omega), \quad \mathrm{s.t.} \quad \mathbb{E}_{s \sim D}\left[\mathbb{H}(\pi_\omega(\cdot \mid s))\right] \geq H_{\mathrm{target}}
\end{align}
where $L_{\mathrm{FM}}$ is the standard flow-matching loss that clones the data behavior (\emph{i.e.}, \cref{eq:fm}), $H_{\mathrm{target}}$ is the target entropy that the Gaussian policy is constrained to be above, and 
\begin{align}
    L_{\mathrm{LatentGaussian}}(\omega) = \mathbb{E}_{z \sim \pi_\omega(\cdot \mid s)}\left[ - Q^z_\psi(s, z)\right]
\end{align}
where $Q^z_\psi(s, z)$ is a distilled critic function in the noise space that is regressed to the original critic function, $Q_\phi(s, a)$:
\begin{align}
    L(\psi) = \mathbb{E}_{z \sim \mathcal{N}}\left[(Q^z_\psi(s, z) - Q_\phi(s, \mathrm{ODE}(f_{\bar \theta}(s, \cdot, \cdot), z))^2\right].
\end{align}
Intuitively, DSRL directly learns a policy in the noise space by hill-climbing a distilled critic that also operates in the noise space. Finally, the critic loss for the original critic function in the action space is
\begin{align}
    L(\phi) = \mathbb{E}_{z \sim \pi_\omega (\cdot \mid s')}\left[(Q_{\phi}(s, a) - r- Q_{\bar \phi}(s', \mathrm{ODE}(f_{\bar \theta}(s', \cdot, \cdot), z))^2\right]
\end{align}
We use $K=10$ critic functions and pessimistic target value backup as described in \cref{eq:pvbk}. The policy we use to interact with the environment is a combination of the BC policy $\pi_{f_\theta}$ and the Gaussian noise-space policy. We first sample $z \sim \pi_\omega(\cdot \mid s)$ and then run it through the BC flow policy to obtain the final action $ a \leftarrow \mathrm{ODE}(f_\theta(s, \cdot, \cdot), z)$. One important implementation detail for stability in the offline-to-online setting is to use the target network for the BC flow policy $f_{\bar \theta}$ (instead of $f_\theta$). Without it, DSRL can become unstable sometimes when the BC flow policy changes too fast online.

\textbf{IFQL} is a baseline considered in FQL~\citep{park2025flow} as a flow counterpart of implicit diffusion Q-learning (IDQL)~\citep{hansen2023idql}, where IQL~\citep{kostrikov2021offline} is used for value learning and the policy extraction is done by sampling multiple actions from a behavior cloning diffusion policy and selecting the one that maximizes the $Q$-function value.

More specifically, we train a critic function $Q_\phi(s, a)$ and a value function $V_\xi(s)$ with implicit value backup:
\begin{align}
    L(\phi) &= (Q_{\phi}(s, a) - r- V_\xi(s'))^2 \\
    L(\xi) &= F^\kappa_{\mathrm{exp}}(Q_{\bar \phi}(s, a) - V_\xi(s))
\end{align}
where $F^\kappa_{\mathrm{exp}}(u) = |\kappa - \mathbb{I}_{u < 0}|u^2$ is the expectile regression loss function. 

On top of that, we also use $K=10$ critic functions and pessimistic target value backup as described in \cref{eq:pvbk} for training the value function $V_\xi(s)$. To extract a policy from $Q_\phi(s, a)$, IFQL uses rejection sampling with a base behavior cloning flow policy that is trained with the standard flow-matching objective. In particular, the output action $a^\star$ for $s$ is selected as the following:
\begin{align}
    a^\star \leftarrow {\arg\max}_{a_1, \cdots, a_N} Q(s, a_i), \quad a_1, \cdots, a_N \sim \pi_{f_\beta}(\cdot \mid s).
\end{align}

{\color{gray}\textbf{5. Gaussian.}}

\textbf{RLPD}~\citep{ball2023efficient} is a strong offline-to-online RL method that trains a SAC agent from scratch online with a 50/50 sampling scheme (\emph{i.e.}, $50\%$ of training examples in a batch come from the offline dataset whereas the other $50\%$ of training examples come from the online replay buffer). In addition to the standard RLPD update, we also add a behavior cloning (BC) loss which we find to improve the performance further. We tune over the BC coefficient ($\alpha$) and report the value we use for each domain in \cref{tab:hparam}.

{
\textbf{ReBRAC}~\citep{tarasov2024revisiting} is a strong offline RL method that trains a TD3~\citep{fujimoto2018addressing} agent with behavior cloning loss. In practice, we find two hyperparameters to impact performance the most. The first hyperparameter is $\alpha$ which controls the strength of the behavior cloning loss. The second hyperparameter is $\sigma$ which controls the magnitude of the Gaussian noise added in the TD3 policy. We set the action noise clip to $0.5$ and the actor delay to $2$, following the original paper.
}

\section{Pseudocode and Hyperparameters}

\begin{algorithm}[t]
  \caption{Learning procedure in \ours{}.}\label{algo:qam}
  \begin{algorithmic}
  \State \textbf{Input:} $(s, a, s', r)$: off-policy transition tuple, ${f_\beta}$: behavior velocity field, ${f_\theta}$ fine-tuned velocity field, $Q_\phi$: critic function.
  \State Optimize $\phi$ w.r.t $L(\phi) = \left[Q_\phi(s, a) - r - \gamma Q_{\bar \phi}(s', a'\sim \pi_\theta(\cdot \mid s'))\right]^2$ \Comment{TD backup}
  \State $\mathbf a = \{a_0, a_{h}, \cdots, a_1\} \leftarrow \mathrm{SDE}(f_\theta(s, \cdot, \cdot))$ \Comment{Memory-less SDE; \cref{eq:mppsde}}
  \State $\tilde{g}_1 \leftarrow -\tau\nabla_{a_1} Q_\phi(s, a_{1})$ \Comment{Compute the critic's action gradient}
  \State $\tilde{g}_0, \tilde{g}_{h}, \cdots, \tilde{g}_{1-h} \leftarrow \mathrm{LeanAdj}({f_\beta}(s, \cdot, \cdot), \tilde{g}_1, \mathbf a)$ \Comment{Lean adjoint states; \cref{eq:leanadj}}
  \State Optimize $\theta$ w.r.t $L(\theta) = \sum_{t}\left\|\frac{2(f_\theta(s,a_t, t) - f_\beta(s, a_t, t))}{\sigma_t} + \sigma_t \tilde{g}_t\right\|_2^2$ \Comment{Adjoint matching; \cref{eq:qam}}

\State \textbf{Output: ${f_\theta}, Q_\phi$}
  \end{algorithmic}
\end{algorithm}

While most methods share a common set of hyperparameters (\cref{tab:rl-hyperparams}), for a fair comparison, most methods need to be tuned for each domain. We include the domain-specific hyperparameters in \cref{tab:hparam} and their tuning ranges in \cref{tab:hparam-range}. To pick the hyperparameter for each domain for each method, we first run a sweep over all hyperparameters in the range (specified by \cref{tab:hparam-range}), or all combinations of them if there are multiple hyperparameters involved. The hyperparameter tuning runs use 4 seeds for each method for each hyperparameter configuration for each of the two tasks per domain. For locomotion domains, we use task 1 (the default task) and task 4. For manipulation domains, we use task 2 (the default task) and task 4. We use task 4 in addition to the default task recommended by OGBench because we often find the combination of these two covers the characteristics of each domain better. We then use the combined performance of the two tuning tasks per domain per method to pick the hyperparameter configuration. We include them in \cref{tab:hparam}. Finally, for all our main results, we run all methods on all five tasks for each domain on 12 \emph{new} seeds (different from the tuning seeds). We pick the hyperparameter range such that the total number of tuning runs is similar across methods. To achieve this, we use the following strategy: For methods where more than one hyperparameter needs to be tuned, we use a coarser hyperparameter range. For methods where there is only one hyperparameter, we use a more fine-grained sweep. 

\label{appendix:hyperparameters}
\begin{table}[ht]
    \centering
    \begin{tabular}{@{}cc@{}}
        \toprule
        \textbf{Parameter} & \textbf{Value} \\
        \midrule
        Batch size &  $256$ \\
        \multirow{2}{*}{Discount factor ($\gamma$)} & $0.99$ for \texttt{\{puzzle/scene/cube/antmaze-large\}-*}
        \\& $0.995$ for \texttt{\{humanoidmaze/antmaze-giant\}-*} \\
        Optimizer & Adam \\
        Learning rate & $3 \times 10^{-4}$ \\
        Target network update rate ($\lambda$) & $5 \times 10^{-3}$ \\
        Critic ensemble size ($K$) & $10$ \\
        \multirow{2}{*}{Critic target pessimistic coefficient ($\rho$)} & $0.5$ for \texttt{\{puzzle/scene/cube/antmaze\}-*} \\ & $0$ for \texttt{humanoidmaze-*} \\
        UTD ratio & $1$ \\
        Number of flow steps ($T$) & $10$ \\
        Number of offline training steps & $10^6$; except RLPD ($0$) \\
        Number of online environment steps & $0.5 \times 10^6$ \\
        Network width & $512$ \\
        Network depth & $4$ hidden layers \\
        Optimizer gradient max norm clipping & $1$ for \texttt{QSM}, \texttt{DAC}, \texttt{BAM} and \texttt{QAM*}. No clipping for others. \\
    \bottomrule
    \end{tabular}
    \vspace{1mm}
    \caption{ \textbf{Common hyperparameters.}}
    \label{tab:rl-hyperparams}
\end{table}

\begin{table}[h]
    \centering
\scalebox{0.46}{
    \begin{tabular}{@{}cccccccccccccccccc@{}}
    \toprule
            Domains                  & \texttt{RLPD} &\texttt{ReBRAC}  &\texttt{FAWAC} &\texttt{QSM} &\texttt{DAC} &\texttt{FBRAC}  & \texttt{FQL} & \texttt{DSRL} & \texttt{FEdit} & \texttt{IFQL} & \texttt{CGQL} & \texttt{CGQL-M} & \texttt{CGQL-L} & \texttt{BAM} &\texttt{QAM} & \texttt{QAM-E} & \texttt{QAM-F}\\
   & $\alpha$ & $(\alpha, \sigma)$  & $\tau$ & $(\tau, \eta)$ & $\eta$ & $\alpha$ & $\alpha$  & $\sigma_z$  & $\sigma_a$ & $\tau$ & $(\vartheta, \tau)$ & $(\vartheta, \varrho, \tau)$ & $(\vartheta, \varrho, \tau)$ & $\tau$ & $\tau$ & $(\tau,\sigma_a)$ & $(\tau, \alpha)$ \\
    \midrule
        \texttt{scene-sparse-*} & $0.3$ &  $(0.03, 0)$ & $6.4$  & $(3, 30)$ & $1$    & $100$ & $300$ & $0.4$ & $0.2$ & $0.9$ & $(10, 0.1)$ & $(10, 0.1, 0.1)$ & $(10, 0.1, 0.1)$ & $3$ & $1$ & $(1, 0)$ & $(1, 300)$ \\
        \texttt{puzzle-3x3-sparse-*}& $0.01$  & $(0.1, 0)$ & $0.8$     & $(3, 30)$ & $1$        & $0.3$ & $300$ & $1$ & $0.2$ & $0.95$ & $(10, 0.1)$ & $(10, 0.1, 0.1)$ & $(10, 0.001, 0.1)$  & $10$ & $3$ & $(1, 0.1)$ & $(3, \infty)$\\
        \texttt{puzzle-4x4-100M-sparse-*} & $0$   & $(0.01, 0.2)$  & $0.8$   & $(10, 1)$ & $1$        & $0.3$ & $1$ & $1$ & $0.8$ & $0.9$ & $(10, 0.1)$ & $(10, 0.001, 1)$ & $(10, 0.001, 1)$  & $30$ & $30$ & $(0.1, 0.9)$ & $(3, 3)$ \\
        \texttt{cube-double-*} & $0.1$   & $(0.01, 0)$  & $0.8$   & $(3, 10)$ & $3$       & $0.1$ & $300$ & $1$ & $0.2$ & $0.9$ & $(10, 0.01)$ & $(10, 0.001, 0.01)$ & $(10, 0.001, 0.01)$  & $3$ & $1$ & $(1, 0)$ & $(1, \infty)$\\
        \texttt{cube-triple-*} & $0$    & $(0.01, 0.2)$  & $0.8$   & $(3, 3)$ & $0.3$      & $0.03$ & $30$ & $1.4$ & $0.3$ & $0.95$  & $(10, 0.1)$ & $(10, 0.001, 0.1)$ & $(10, 0.001, 0.1)$ & $30$ & $3$ & $(3, 0.1)$ & $(10, 300)$ \\
        \texttt{cube-quadruple-100M-*}& $0$  & $(0.01, 0.2)$  & $0.8$  & $(1, 10)$ & $1$  & $1$ & $100$ & $1.4$ & $0.4$ & $0.95$  & $(10, 0.1)$ & $(10, 0.1, 0.01)$ & $(10, 0.1, 0.01)$ & $10$ & $1$ & $(3, 0.1)$ & $(0.3, 30)$ \\
        \texttt{antmaze-large-*}& $0$     & $(0.01, 0)$   & $6.4$  & $(30, 30)$ & $0.3$    & $0.1$ & $3$ & $0.8$ & $0.2$ & $0.9$  & $(10, 0.1)$ & $(10, 0.1, 0.1)$ & $(10, 0.001, 0.1)$ & $30$ & $10$ & $(1, 0.1)$ & $(3, 30)$ \\
        \texttt{antmaze-giant-*} & $0.01$    & $(0.01, 0)$  & $0.8$   & $(30, 30)$ & $0.3$   & $0.1$ & $3$ & $1.2$ & $0.3$ & $0.8$  & $(10, 0.1)$ & $(10, 0.001, 0.1)$ & $(10, 0.001, 0.1)$ & $10$ & $3$ & $(10, 0.1)$ & $(3, 30)$ \\
        \texttt{humanoidmaze-medium-*} & $0.03$  & $(0.01, 0)$  & $6.4$  & $(10, 30)$ & $1$  & $30$ & $30$ & $0.6$ & $0.5$ & $0.7$  & $(10, 0.01)$ & $(10, 0.1, 0.1)$ & $(10, 0.1, 0.1)$ & $10$ & $3$ & $(3, 0.1)$ & $(1, 30)$ \\
        \texttt{humanoidmaze-large-*} & $0$   & $(0.01, 0.1)$  & $0.8$  & $(10, 30)$ & $0.3$   & $10$ & $30$ & $0.8$ & $0.1$ & $0.8$  & $(10, 0.01)$ & $(10, 0.1, 0.1)$ & $(10, 0.1, 0.1)$ & $10$ & $3$ & $(3, 0.1)$ & $(0.3, 30)$ \\
    \bottomrule
    \end{tabular}
    }
    \vspace{1mm}
    \caption{\textbf{Domain-specific hyperparameters.} The best hyperparameter configuration obtained from our tuning runs. We use the same hyperparameter configuration for all tasks in each domain.}
    \label{tab:hparam}
\end{table}

\begin{table}[h]
    \centering
    \begin{tabular}{@{}ccc@{}}
    \toprule
    \textbf{Method} & \textbf{Hyperparameter(s)} & \textbf{Sweep Range} \\
    \midrule
            \texttt{RLPD} & $\alpha$ & $\{0, 0.003, 0.01, 0.03, 0.1, 0.3, 1, 3, 10, 30\}$   \\
            \texttt{ReBRAC} & $(\alpha, \sigma)$ & $(\{0.01, 0.03, 0.1, 0.3, 1\}, \{0, 0.1, 0.2\})$   \\
            \texttt{FBRAC} & $\alpha$ & $\{0.03, 0.1, 0.3, 1.0, 3.0, 10.0, 30.0, 100.0\}$   \\
            \texttt{CGQL}  & $(\vartheta, \tau)$   & $(\{0.01, 0.1, 1\}, \{0.1, 1, 10\})$        \\
            \texttt{CGQL-\{M,L\}}  & $(\vartheta, \tau, \varrho)$   & $(\{0.01, 0.1, 1\}, \{0.1, 1, 10\}, \{0.001, 0.1\})$        \\
            \texttt{FQL}   & $\alpha$ & $\{0.3, 1, 3, 10, 30, 100, 300, 1000\}$       \\
            \texttt{DSRL}  & $\sigma_z$ & $\{0.1, 0.2, 0.4, 0.6, 0.8, 1, 1.2, 1.4\}$\\
            \texttt{FEdit} & $\sigma_a$ & $\{0.1, 0.2, 0.3, 0.4, 0.5, 0.6, 0.7, 0.8\}$     \\
            \texttt{FAWAC} & $\tau$    & $\{0.1, 0.2, 0.4, 0.8, 1.6, 3.2, 6.4, 12.8\} $          \\
            \texttt{IFQL}  & $\kappa$  & $\{0.5, 0.6, 0.7, 0.8, 0.9, 0.95\}$ \\
            \texttt{DAC} & $\eta$    & $\{0.1, 0.3, 1, 3.0, 10, 30, 100, 300\} $          \\
            \texttt{QSM}  & $(\tau, \eta)$  & $(\{1.0, 3.0, 10.0, 30\}, \{1, 3, 10, 30\})$ \\
            \texttt{BAM}   & $\tau$& $\{0.1, 0.3, 1, 3, 10, 30\}$\\
            \texttt{QAM}   & $\tau$& $\{0.1, 0.3, 1, 3, 10, 30\}$\\
            \texttt{QAM-E}   & $(\tau, \sigma_a)$& $(\{0.1, 0.3, 1, 3, 10\}, \{0, 0.1, 0.5, 0.9\})$\\
            \texttt{QAM-F}   & $(\tau, \alpha)$& $(\{0.1, 0.3, 1, 3, 10\}, \{3, 30, 300, \infty\})$\\
    \bottomrule
    \end{tabular}
    \vspace{1mm}
    \caption{\textbf{Domain-specific hyperparameter tuning range.} For \texttt{QAM-F}, $\alpha = \infty$ is equivalent to the original \texttt{QAM}. Similarly, for \texttt{QAM-E}, $\sigma_a = 0\;$ is equivalent to the original \texttt{QAM}. For methods with more than one hyperparameter, we tune all possible combinations within the specified sweep ranges. For example, for \texttt{CGQL}, we sweep over all $3 \times 3 = 9$ configurations with 3 possible values of $\vartheta$ and 3 possible values of $\tau$.}
    \label{tab:hparam-range}
\end{table}

\section{Theoretical guarantees}
\label{appendix:theory}
\begin{restatable}[Extension of Proposition 7 in \citet{domingo-enrich2025adjoint} to Policy Optimization.]{proposition}{theory}
\label{thm:theory}
Take $L_{\mathrm{AM}}(\theta)$ in \cref{eq:am}, there is a unique $f_\theta$ such that
\begin{align}
    \frac{\partial}{\partial f_\theta} L_{\mathrm{AM}} = 0,
\end{align}
and for all $s \in \mathrm{supp}(D)$,
\begin{align}
    \pi_\theta(\cdot \mid s ) \propto \pi_\beta(\cdot \mid s) e^{\tau Q_\phi(s, a)}.
\end{align}
\end{restatable}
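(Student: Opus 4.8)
The plan is to reduce the claim to Proposition 7 of \citet{domingo-enrich2025adjoint} by conditioning on the state and exploiting that the adjoint matching objective factorizes over $s$. First I would fix an arbitrary $s \in \mathrm{supp}(D)$ and observe that the conditional objective
\[
L_{\mathrm{AM}}(\theta; s) = \mathbb{E}_{\{a_t\}_t}\left[\int_0^1 \left\|\frac{2(f_\theta(s, a_t, t) - f_\beta(s, a_t, t))}{\sigma_t} + \sigma_t \tilde{g}_t\right\|_2^2 \dd t\right]
\]
is exactly an instance of the unconditional adjoint matching objective analyzed in the generative-modeling setting, under the dictionary: base velocity field $f_\beta(s, \cdot, \cdot)$ inducing base distribution $\pi_\beta(\cdot \mid s)$, tilt/value function $a_1 \mapsto \tau Q_\phi(s, a_1)$, and lean adjoint $\tilde{g}_t$ defined by the reverse ODE (\cref{eq:leanadj-orig}) with boundary condition $\tilde{g}_1 = -\tau \nabla_{a_1} Q_\phi(s, a_1)$. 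The memory-less noise schedule $\sigma_t = \sqrt{2(1-t)/t}$ is precisely the one required for their analysis, so their hypotheses transfer verbatim to each conditional problem.

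Next I would write the full objective as $L_{\mathrm{AM}}(\theta) = \mathbb{E}_{s \sim D}[L_{\mathrm{AM}}(\theta; s)]$ and argue that the stationarity condition decomposes pointwise in $s$. Treating $f_\theta$ nonparametrically, i.e., as a free family $\{f_\theta(s, \cdot, \cdot)\}_s$ of velocity fields, one per state, as is standard in this kind of variational argument, the degrees of freedom associated with distinct states do not interact in the integrand. Since the integrand is a nonnegative squared norm and the outer expectation over $s$ has fixed, $s$-independent weights $\propto p_D(s)$, the functional derivative $\partial L_{\mathrm{AM}}/\partial f_\theta$ vanishes if and only if $\partial L_{\mathrm{AM}}(\theta; s)/\partial f_\theta = 0$ for every $s \in \mathrm{supp}(D)$. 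Outside the support the field is unconstrained, which is why the conclusion is asserted only on $\mathrm{supp}(D)$.

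Finally I would invoke Proposition 7 of \citet{domingo-enrich2025adjoint} for each conditional problem: it guarantees that the stationarity equation $\partial L_{\mathrm{AM}}(\theta; s)/\partial f_\theta = 0$ has a unique solution $f_\theta(s, \cdot, \cdot)$, and that the terminal marginal of the induced SDE is the tilt distribution $\pi_\theta(\cdot \mid s) \propto \pi_\beta(\cdot \mid s) e^{\tau Q_\phi(s, a)}$. Collecting these conditional statements over $s \in \mathrm{supp}(D)$ then yields both the global uniqueness and the claimed characterization simultaneously.

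The hard part will be making the pointwise-in-$s$ decomposition rigorous, since the sampling distribution of the trajectory $\{a_t\}_t$ in $L_{\mathrm{AM}}(\theta; s)$ itself depends on $f_\theta$ through the forward SDE (\cref{eq:mpsde}). I would handle this exactly as in the source: the lean adjoint matching gradient is defined with the reference trajectory measure held fixed, which is the construction that removes backpropagation through the SDE, so $\partial/\partial f_\theta$ acts only on the explicit occurrence of $f_\theta$ inside the squared term. This is the crux that lets the stationarity condition inherit the clean structure of Proposition 7 rather than that of the backprop-based basic adjoint matching objective (\cref{eq:bam}), and it is what must be stated carefully to keep the reduction honest.
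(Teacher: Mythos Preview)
Your proposal is correct and follows essentially the same approach as the paper: condition on $s$, identify the conditional objective with the unconditional adjoint matching objective of \citet{domingo-enrich2025adjoint}, invoke their Proposition~7 per state, and then combine over $s \in \mathrm{supp}(D)$ via a Cartesian-product argument to obtain global uniqueness and the tilt characterization. The paper's version spends more effort on explicit notational bookkeeping (simplifying $f_{\mathrm{res}}$ and $b$ under $\alpha_t=t$, $\beta_t=1-t$ to match Equations~27, 37, 38 of the cited work), while your final paragraph on holding the trajectory measure fixed is an extra caution the paper leaves implicit in the citation.
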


\begin{proof}
\label{proof:theory}
Our proof mainly rewrites the assumptions and the statement of Proposition 7 of \citet{domingo-enrich2025adjoint} in our notation with a simple extra step that extends it to the state-conditioned version (\emph{e.g.}, for policy optimization).

We first define (from Equation 27 in \citet{domingo-enrich2025adjoint}) the residual velocity field as
\begin{align}
    f_{\mathrm{res}}(s, a_t, t) := \sqrt{\frac{2}{\beta_t(\beta_t\dot\alpha_t/\alpha_t-\dot\beta_t)}} (f_\theta(s, a_t, t) - f_\beta(s, a_t, t)).
\end{align}
While the original result showed for a more general case with a family of $\alpha, \beta$ (and later on $\sigma$), in our work we assume
\begin{align}
    \alpha(t) &= t, \\ 
    \beta(t) &= 1 - t, \\ 
    \sigma(t) &= \sqrt{2(1-t)/t}.
\end{align}
This allows us to simplify the expression for $f_{\mathrm{res}}$ as 
\begin{align}
    f_{\mathrm{res}}(s, a_t, t) = \sqrt{\frac{2t}{1-t}} (f_\theta(s, a_t, t) - f_\beta(s, a_t, t))
\end{align}

Then, we restate the definition of $b$ (from Equation 27 in \citet{domingo-enrich2025adjoint}):
\begin{align}
    b(s, a_t, t) &=  2(f_\beta(s, a_t, t) + f_\theta(s, a_t, t)) - \frac{\dot\alpha_t}{\alpha_t}a_t - \sigma(t) f_{\mathrm{res}}(s, a_t, t) \\
    &= 2(f_\beta(s, a_t, t) + f_\theta(s, a_t, t)) - a_t/t -2 f_\theta(s, a_t, t) \\
    &= 2f_\beta(s, a_t, t) - a_t/t
\end{align}

We can now rewrite our `lean' adjoint state definition as
\begin{align}
    \dd \tilde g_t = -{\tilde{g}_t^{\top}} \nabla_{a_t}\left[b(s, a_t, t)\right], \quad \tilde g_1 = -\tau \nabla_{a} Q_\phi(s, a_1)
\end{align}
which coincides with the definition in Equation 38 in \citet{domingo-enrich2025adjoint}, with $f = 0$ and $g(\cdot) = -\tau Q_\phi(s, \cdot)$). Now, we can rewrite our adjoint matching objective as
\begin{align}
    L_{\mathrm{AM}}(\theta) &:= \mathbb{E}_{s \sim D, \{a_t\}_t}\left[\tilde L_{\mathrm{AM}}(f_\theta, \{a_t\}_t)\right]
\end{align}
where
\begin{equation}
\begin{aligned}
    \tilde L_{\mathrm{AM}}(s, f_\theta, \{a_t\}_t) &:= \int_0^1\left\|f_{\mathrm{res}}(s, a_t, t) + \sigma(t) \tilde g_t\right\| \dd t \\
    &= \frac{1}{2}\int_0^1\left\|\sqrt{\frac{2t}{1-t}} f_\theta(s, a_t, t) + \sigma(t)\tilde g_t\right\|_2^2 \dd t \\
    &= \frac{1}{2}\int_0^1\left\|2f_\theta(s, a_t, t)/\sigma(t) + \sigma(t)\tilde g_t\right\|_2^2 \dd t
\end{aligned}
\end{equation}
Comparing $\tilde L_{\mathrm{AM}}(s, f_\theta, \{a_t\}_t)$ to the definition in Equation 37 in \citet{domingo-enrich2025adjoint}, they are different by only a factor of 2 \emph{conditioned on a fixed $s$}. Thus, their critical points are the same. 

By applying Proposition 7 in \citet{domingo-enrich2025adjoint}, we can conclude that for a fixed $s$, the only critical point of the following loss function,
\begin{align}
    \mathbb{E}_{\{a_t\}_t}\left[\tilde L_{\mathrm{AM}}(s, f_\theta, \{a_t\}_t)\right],
\end{align}
is $f^\star(s, a_t, t)$, the velocity field that generates the following distribution,
\begin{align}
    \pi^\star(\cdot \mid s) \propto \pi_\beta(\cdot \mid s) \exp(\tau Q_\phi(s, \cdot)).
\end{align}
Finally, since $L_{\mathrm{AM}}(\theta)$ is a linear combination of $\mathbb{E}_{\{a_t\}_t}\left[\tilde L_{\mathrm{AM}}(s, f_\theta, \{a_t\}_t)\right]$ over different $s$, the critical point of $L_{\mathrm{AM}}(f_\theta)$
is simply the Cartesian product over the critical points for each $s \in \mathrm{supp}({D})$. Since there is only one critical point for each $\mathbb{E}_{\{a_t\}_t}\left[\tilde L_{\mathrm{AM}}(s, f_\theta, \{a_t\}_t)\right]$, $L_{\mathrm{AM}}(f_\theta)$ also has only one critical point and coincides with $f^\star(s, a_t, t)$. This concludes that the only critical point of $L_{\mathrm{AM}}(f_\theta)$ results in velocity fields that satisfy
\begin{align}
    \pi^\star(\cdot \mid s) \propto \pi_\beta(\cdot \mid s) \exp(\tau Q_\phi(s, \cdot)), \quad \forall s \in \mathrm{supp}(D).
\end{align}
\end{proof}

\begin{table}[h]
    \centering
    \begin{tabular}{@{}ccc@{}}
    \toprule
    \textbf{Method} & \textbf{Training Speed (milliseconds/step)} & \textbf{Parameter Count} \\
    \midrule
            \texttt{ReBRAC}   & $2.97$ & \num{9226271}   \\
            \texttt{FBRAC}    & $4.54$ & \num{9237535}   \\
            \texttt{CGQL}     & $9.62$   & \num{9242655}       \\
            \texttt{CGQL-\{M,L\}}  & $10.40$   &  \num{9242655}        \\
            \texttt{FQL}      & $3.58$ & \num{10082868}       \\
            \texttt{DSRL}     & $5.63$ & \num{18474580}\\
            \texttt{FEdit}    & $3.77$ & \num{10093642}     \\
            \texttt{FAWAC}    & $3.44$ & \num{10065952}          \\
            \texttt{IFQL}     & $2.66$ & \num{10065952} \\
            \texttt{DAC, QSM}  & $4.44$ & \num{10320447} \\
            \texttt{BAM}      & $5.52$ & \num{10083380} \\
            \texttt{QAM}      & $5.83$ & \num{10083380} \\
            \texttt{QAM-E} & $6.61$ & \num{10939487} \\
            \texttt{QAM-F}  & $6.36$ & \num{10928713}\\
    \bottomrule
    \end{tabular}
    \vspace{1mm}
    \caption{ \textbf{Training speed and parameter count for each method on \texttt{cube-triple}}.}
    \label{tab:overhead}
\end{table}

\end{document}